\documentclass[preprint,12pt]{elsarticle}




\usepackage{amssymb}
\usepackage{amsmath}
\usepackage{graphicx}
\usepackage{bm}
\usepackage{booktabs}
\usepackage{multirow}
\usepackage{array}
\usepackage{color}
\usepackage{hyperref}

\usepackage{algorithm}
\usepackage{algorithmic}
\usepackage{amsmath}
\usepackage{amsthm}
\usepackage{booktabs}
\usepackage{algorithm}
\usepackage{algorithmic}
\usepackage[dvipsnames]{xcolor}
\usepackage{array}
%
%


\definecolor{mc0}{HTML}{66FFFF}

\newcommand{\OP}[1]{\mathop{\mathrm{#1}}}
\newcolumntype{C}[1]{>{\centering\arraybackslash}m{#1}}
\newcolumntype{L}[1]{>{\arraybackslash}m{#1}}

\newtheorem{lemma}{Lemma}
\newtheorem{proposition}{Proposition}

\newtheorem{corollary}{Corollary}


\journal{Arxiv}

\begin{document}

\begin{frontmatter}



\title{PCaM: A Progressive Focus Attention-Based Information Fusion Method for Improving Vision Transformer Domain Adaptation}



\author[1,2]{Zelin Zang}
\author[5]{Fei Wang} 
\author[1]{Liangyu Li} 
\author[2]{Jinlin Wu} 
\author[7]{Chunshui Zhao} 
\author[2,3,4,6]{Zhen Lei$^\dagger$}
\author[5]{Baigui Sun$^\dagger$} 

\affiliation[1]{Westlake Institute for Advanced Studies, Westlake University; HangZhou; China}
\affiliation[2]{Centre for Artificial Intelligence and Robotics (CAIR); HKISI-CAS}
\affiliation[3]{State Key Laboratory of Multimodal Artificial Intelligence Systems (MAIS); Institute of Automation; Chinese Academy of Sciences (CASIA)}
\affiliation[4]{School of Artificial Intelligence; University of Chinese Academy of Sciences (UCAS)}
\affiliation[5]{Alibaba Group; HangZhou; China}
\affiliation[6]{School of Computer Science and Engineering, the Faculty of Innovation Engineering, Macau University of Science and Technology, Macau 999078, China }
\affiliation[7]{Microsoft}

\begin{abstract}
    Unsupervised Domain Adaptation (UDA) aims to transfer knowledge from a labeled source domain to an unlabeled target domain. Recent UDA methods based on Vision Transformers (ViTs) have achieved strong performance through attention-based feature alignment. However, we identify a key limitation: foreground object mismatch, where the discrepancy in foreground object size and spatial distribution across domains weakens attention consistency and hampers effective domain alignment. To address this issue, we propose the Progressive Focus Cross-Attention Mechanism (PCaM), which progressively filters out background information during cross-attention, allowing the model to focus on and fuse discriminative foreground semantics across domains. We further introduce an attentional guidance loss that explicitly directs attention toward task-relevant regions, enhancing cross-domain attention consistency. PCaM is lightweight, architecture-agnostic, and easy to integrate into existing ViT-based UDA pipelines. Extensive experiments on Office-Home, DomainNet, VisDA-2017, and remote sensing datasets demonstrate that PCaM significantly improves adaptation performance and achieves new state-of-the-art results, validating the effectiveness of attention-guided foreground fusion for domain adaptation.
\end{abstract}



\begin{keyword}
Unsupervised Domain Adaptation\sep
Vision Transformer\sep
Foreground-Background Alignment\sep
Cross-Domain Information Fusion


\end{keyword}

\end{frontmatter}


\section{Introduction}
\label{sec:intro}

\begin{figure*}[t]
    \centering
    \includegraphics[width=0.99\linewidth]{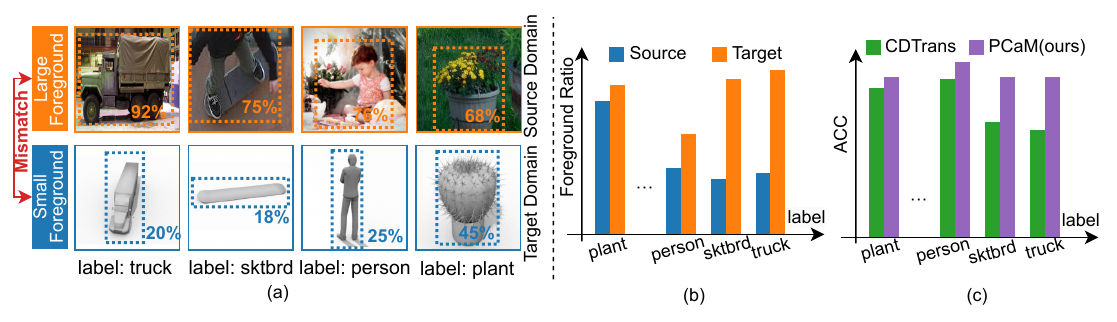}
    \caption{\textbf{Highlighting the Foreground Object Mismatch Issue and the Enhancement Achieved with PCaM in the VisDA Dataset.} Part (a) presents bar plots indicating foreground ratios (defined by the proportion of pixels encircled in red to the total pixel count) for individual categories across both source and target domains. Part (b) depicts the bar plots representing the accuracy of each category. In contrast, part (c) showcases representative samples from each category with the foreground entities emphasized in green/purple.}
    \label{fig_problem}
\end{figure*}

To address the frequent domain shifts that challenge the generalization ability of pre-trained backbone models~\cite{zang2022dlme}, researchers have increasingly focused on unsupervised domain adaptation (UDA) strategies~\cite{LUO2025103197,zang2023boosting,jiang2025multi}. UDA approaches leverage knowledge from labeled source domains to improve model performance in unlabeled target domains, particularly under the framework of visual transformer (ViT)-based UDA methods. Within this context, methods like CDTrans~\cite{xu2021cdtrans} utilize cross-attention mechanisms~\cite{yang2025ccin,li2024crossfuse} to enhance the learning of domain-invariant features by guiding the attention across domains. 

Despite these advancements, our analysis has revealed a critical limitation in ViT-based transfer learning models: the \textit{Foreground Object Mismatch Issue (\textbf{FOM Issue})}. This issue arises when there is a substantial size disparity between the foreground object in the source and target domains. We argue that such discrepancies hinder the effective integration of domain features through cross-attention, as the mechanism becomes susceptible to noise during feature alignment. Specifically, mismatched foreground sizes distort the model's capacity to accurately align features, thus impairing performance.
The FOM Issue is especially pronounced in datasets like VisDA. Our analysis of the dataset (Fig.\ref{fig_problem}) demonstrates a negative correlation between the degree of foreground proportion mismatch between source and target domains (Fig.\ref{fig_problem}b) and the accuracy of predictions for corresponding categories (Fig.~\ref{fig_problem}c). This evidence suggests that the FOM Issue poses a significant obstacle to the performance of baseline methods in UDA, indicating the need for more nuanced strategies to address foreground mismatches for robust domain transfer.

The FOM issue has been tangentially discussed in other domains, including image processing~\cite{zhao2025multi}. Some concrete solutions~\cite{liu2021f2net,lin2023foreground} enhance model recognition performance by directing the model's focus towards foreground objects~(more related words are in the appendix). Regrettably, the transfer learning paradigm often overlooks the deleterious effects of the FOM Issue, concentrating solely on feature alignment between source and target domains.

To mitigate the \textit{Foreground Object Mismatch Issue (FOM Issue)}, we introduce the \textbf{P}rogressively-Focused \textbf{C}ross-\textbf{a}ttention \textbf{M}echanism, or \textbf{PCaM} (illustrated in Fig.~\ref{fig_intro}). PCaM is designed to tackle foreground-background discrepancies by selectively enhancing foreground focus and minimizing background interference. This approach consists of a modular, plug-and-play transformer component paired with a concentration loss function that jointly enable precise `attention focus' on relevant regions. Specifically, PCaM identifies critical local regions through a progressive attention refinement process, which involves feature cropping to hone in on foreground elements. The concentration loss further stabilizes this attention during fine-tuning by regulating attention map variance, thus preventing noisy alignments and promoting seamless domain adaptation. As seen in Fig.~\ref{fig_problem}c, PCaM achieves substantial performance gains in challenging categories, underscoring its effectiveness in addressing the FOM Issue.
Our contributions are as follows:
(a) \textbf{Identification of the FOM Issue}: We formally identify the FOM Issue in ViT-based UDA, illustrating how addressing this issue can significantly improve cross-domain performance.
(b) \textbf{Introduction of PCaM}: We propose a novel approach, PCaM, to counteract the FOM Issue through attention rollout, feature refinement, and progressively focused loss, facilitating precise foreground extraction.
(c) \textbf{Empirical Validation}: Extensive experiments validate the effectiveness of PCaM across several UDA benchmarks, including VisDA-2017, DomainNet, remote sensing, where it achieves SOTA results, demonstrating its robustness and scalability across domains.

\section{Related Work}

\textbf{Unsupervised Domain Adaptation.}
Unsupervised Domain Adaptation (UDA) tackles the challenge of transferring knowledge to a target domain lacking labeled data by leveraging innovative methods to bridge the domain gap. FixBi~\cite{Jaemin2021FixBi} enhances cross-domain alignment by introducing multiple intermediate domains, while CDTrans~\cite{xu2021cdtrans} demonstrates robustness against noisy labels and excels in feature alignment. DOT~\cite{DOT_2022_mm} employs a transformer-based approach, using categorical tokens from both source and target domains to effectively capture domain-invariant and domain-specific features. PMTrans~\cite{PTrans2023} addresses the domain gap through game-theoretic strategies, connecting source and target domains via intermediary domains. Similarly, SAN~\cite{zang2023boosting} leverages unsupervised contrastive learning to handle open-set and partial-set domain adaptation challenges. Despite these advancements, existing UDA methods often falter in scenarios involving foreground object mismatches, which can lead to unreliable and error-prone outcomes.

\textbf{Cross-Attention Mechanism.}
The attention mechanism plays a critical role in sequence modeling and transduction tasks, demonstrating remarkable success across diverse applications such as abstractive summarization, reading comprehension, textual entailment, and learning task-independent sentence representations~\cite{chen2025self}. Building on this foundation, cross-attention modules extend the capability of attention by processing inputs from two distinct modalities, effectively aligning and aggregating information between them. This approach has proven invaluable in tasks like vision-to-text~\cite{Yao_2019_ACL}, and vision-to-vision~\cite{Li_2021_CVPR}, enabling more effective integration and understanding of multimodal data.

\section{Methodology}
\label{sec:method}

\textbf{Notations. } This work follows the Unsupervised Domain Adaptation (UDA) framework, utilizing labeled source domain data, $\mathcal{X}^{s} = \{\bm{x}^{s}_i\}_{i=1}^{N^{s}}, \mathcal{Y}^{s} = \{\bm{y}^{s}_i\}_{i=1}^{N^{s}}$, and unlabeled target domain data, $\mathcal{X}^{t} = \{\bm{x}^{t}_i\}_{i=1}^{N^{t}}$. Both domains share the same label space but have distinct distributions.

\begin{figure*}[t]
   \centering
   \includegraphics[width=0.99\linewidth]{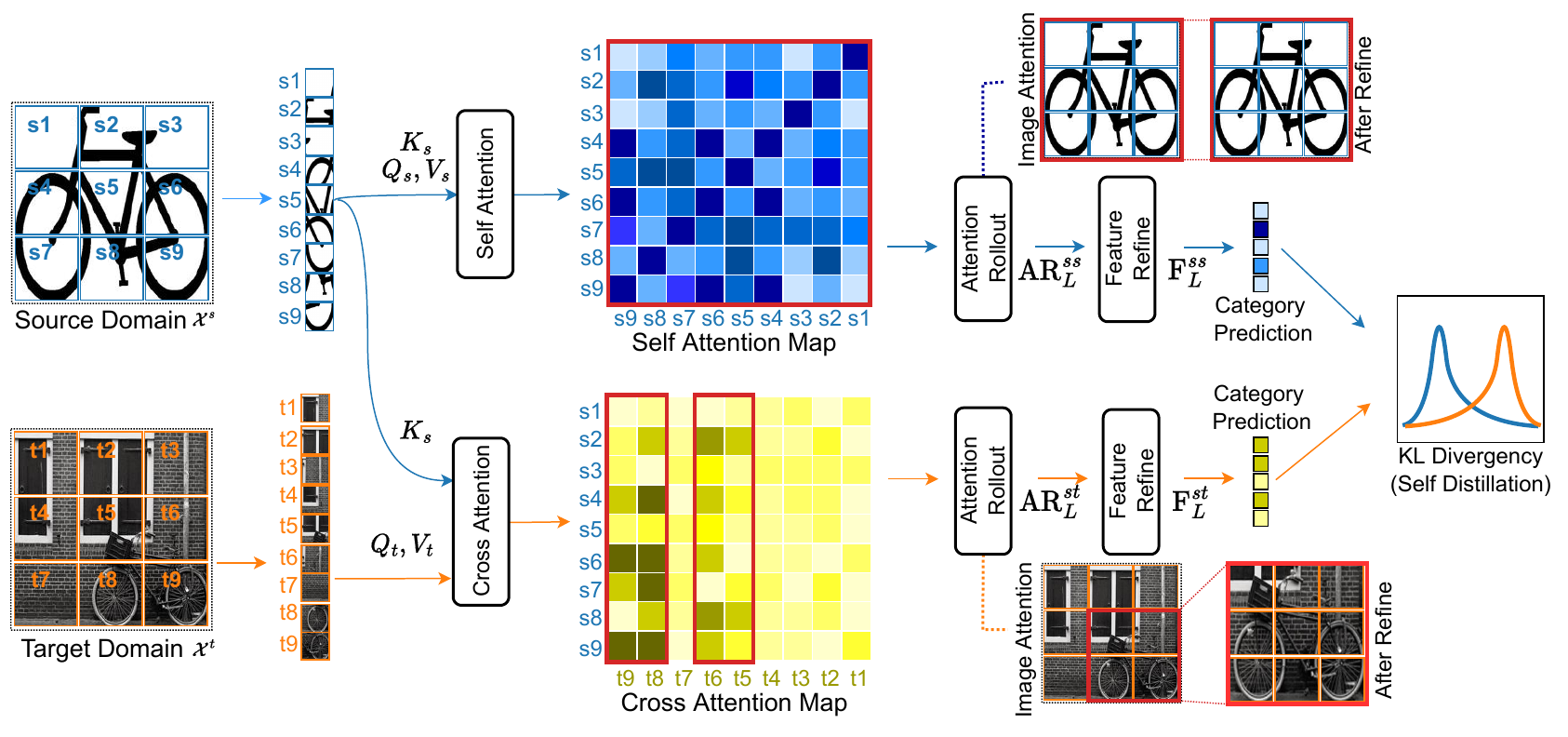}
   \caption{\textbf{Core concept of PCaM.} PCaM addresses the FOM issue by utilizing the cross-attention module to isolate the foreground and filter out unrelated backgrounds. As illustrated, bicycle samples from two distinct domains are harmonized to comparable foreground scales using PCaM, facilitating improved transfer learning. Within the attention map, a darker hue indicates heightened attention.}
   \label{fig_intro}
\end{figure*}

\vspace{2mm}
\subsection{Preliminaries: Vision Transformer (ViT)}

ViT has successfully adapted the attention-based architecture from NLP to computer vision tasks~\cite{dosoviTskiy2020image}. ViT takes an input image $\bm{x}\in \mathbf{R}^{H\times W\times C}$ and divides it into patches. Each patch $\bm{p}_j^{\bm{x}} \in \mathbf{R}^{P\times P\times C}$ is then mapped to a flattened embedding $\bm{z}_j\in \mathbf{R}^{D}$ through a trainable linear projection, where $(P\times P)$ represents the resolution of each patch, and $N = HW/P^2$ represents the total number of patches. To preserve positional information, a $\mathrm{[CLS]}$ token, $\bm{z}_0 = \bm{x}_{\mathrm{[CLS]}}\in \mathbf{R}^{D}$, is added at the beginning of the sequence of patch embeddings. Position embeddings $\bm{E}_{pos}$ are incorporated into each patch embedding. The input states, $\bm{z} = [\bm{z}_0; \bm{z}_1; \dots; \bm{z}_N]$, then traverse through $L$ layers of multi-head self-attention (MSA) and multi-layer perceptron (MLP) alternately. Layer normalization (LN) is applied before each block, and skip connections are employed after each block to facilitate training.

The token embeddings are projected into query, key, and value representations: $\bm{Q}, \bm{K}, \bm{V} \in \mathbf{R}^{N\times D_h}$~\cite{chen2025self}. The following gather computes the output,
\begin{equation}
   \OP{A}(\bm{Q}, \bm{K}, \bm{V}) = \text{softmax}(\bm{Q}\bm{K}^T/\sqrt{D_h}) \bm{V}.
   \label{eqn:attn}
\end{equation}
In self-attention, $\bm{Q}, \bm{K}, \bm{V}$ are computed from the same embeddings. The multi-head self-attention (MSA) is a multi-channel version of attention, defined as follows:
\begin{equation}
   \OP{M}(\bm{z})\!=\!\text{Concat}\left(\!\OP{A}(\bm{Q}_1,\!\bm{K}_1,\!\bm{V}_1)\!,\!\dots\!,\!\OP{A}(\bm{Q}_H,\!\bm{K}_H,\!\bm{V}_H)\!\right)\!,
   \label{eqn:msa}
\end{equation}
where the $\bm{z}^{(0)}$ is input patch embeddings and is defined by the following gathers,
$\bm{z}^{(l)}_{a}  = \bm{z}^{(l-1)} + \OP{M}(\bm{z}^{(l-1)})$,
$\bm{z}^{(l)}      = \bm{z}^{(l)}_{a} + \OP{MLP}(\bm{z}^{(l)}_{a})$
Finally, the $\mathrm{CLS}$ token of the last layer is regarded as the encoded feature, $\bm{f} = \bm{z}^{(L)}_0$.

\vspace{2mm}
\subsection{Data pairing with pseudo labels}

We construct training pairs for the cross-attention module by identifying the best-matching sample from the target domain for each source domain sample. Pseudo-labeled samples are paired from both domains and presented to the model as pairs. For instance, one data point corresponds to the source sample $\mathcal{X}^{s}$, and another corresponds to the target sample $\mathcal{X}^{t}$, resulting in a pair of source and target data $\{\mathcal{X}^{s}, \mathcal{X}^{t}\}$. Pairing within the training dataset relies on similarities in features between images from the two domains. We also employ target samples to match the most similar corresponding samples from the source domain, thereby ensuring sufficient training samples. The set $\mathcal{P}_S $ of selected pairs is defined as follows,
\begin{equation}
   \mathcal{P}_S = \{(\bm{x}^{s}_i, \bm{x}^{t}_j) | i = \arg\min_{i' \in S} d(\bm{f}^{t}_j, \bm{f}^{s}_{i'}), \forall j \in T\},
   \label{eq:ps}
\end{equation}
where $S$ and $T$ represent the source and target data, respectively. $d(\bm{f}^{t}_j,\bm{f}^{s}_i)$ denotes the distance between features of images $i$ and $j$, and $i'$  represents the index of candidate source samples.

For refining target pseudo-labels, all target data is fed into the pre-trained model to obtain the classifier's probability distribution ($\delta$) over source categories. A center-aware filtering strategy is applied to eliminate noisy paired samples from the set. This strategy calculates initial category centers in the target domain using weighted k-means clustering based on these distributions,
\begin{equation}
   \boldsymbol{c}_k=(\sum_{i\in I}\delta_i^k\boldsymbol{f}^t)/(\sum_{i\in I}\delta_i^k),
   y_t=\mathop{\arg\min_k}d(\boldsymbol{c}_k,\boldsymbol{f}^t),
   \label{eq:initcenter}
\end{equation}
where $\delta_i^k$ denotes the probability that image $i$ belongs to category $k$. Pseudo-labels $y_t$ are generated from the target data based on a nearest neighbor classifier, where $i \in I$. Using these pseudo-labels, the centers of the categories in the target domain can be further updated to improve the accuracy of the pseudo-labels. This process can be completed in just one round, and the final pseudo-labels are used to optimize the selected sample pairs, which are retained for training only if the pseudo-labels of the target domain samples are consistent with those of the source domain samples, otherwise they are discarded. In addition, to further ensure the quality of pairs, we introduce a screening mechanism based on similarity threshold, if the feature similarity of the paired samples does not reach the preset threshold, even if the pseudo-labels match, they will be considered as low-quality pairs and removed to minimize the impact of the mismatched samples on the performance of the model.

\begin{figure*}
   \centering
   \includegraphics[width=0.99\linewidth]{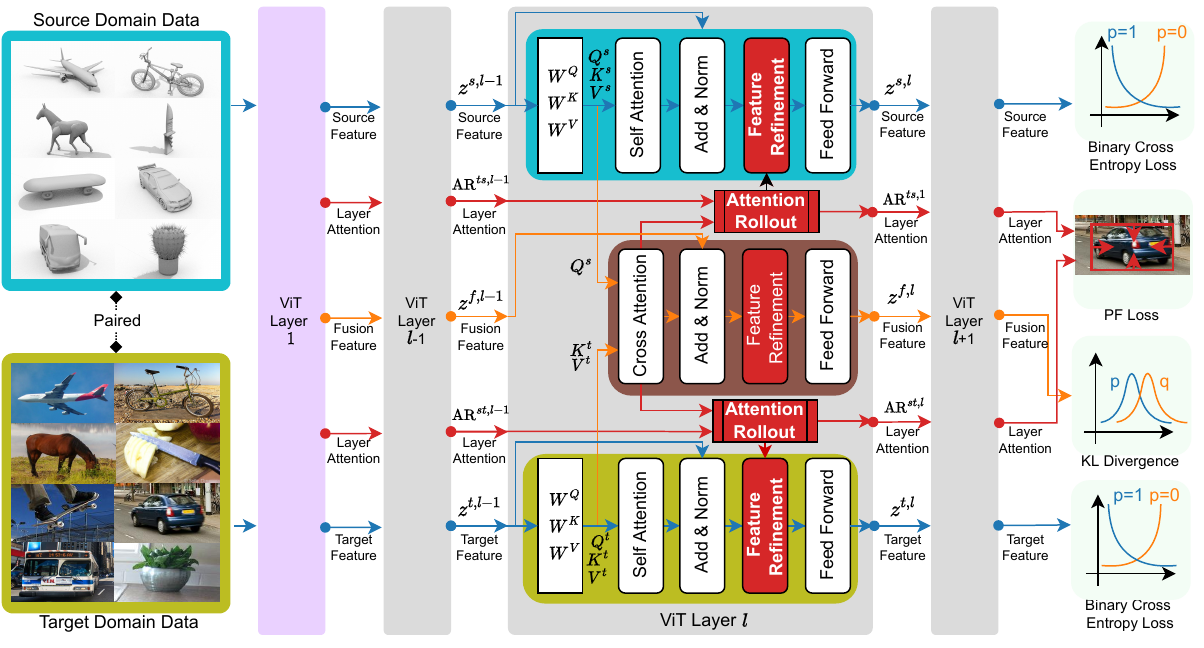}
   \caption{\textbf{Framework of Proposed PCaM.} Our PCaM framework comprises Attention Rollout and PF Loss components. Attention Rollout guides feature cropping, while PF Loss encourages the model to focus on a continuous local region. The source/target feature contain the information of the source/target domain, the fusion feature is the concatenation of the source and target feature.}
   \label{fig_method}
\end{figure*}

\vspace{2mm}
\subsection{The proposed PCaM Framework}

To address the \textit{FOM issue}, as shown in Fig.~\ref{fig_intro} and Fig.~\ref{fig_method}, we introduce PCaM. PCaM seamlessly integrates as a plug-and-play module within the ViT-based UDA framework. PCaM encompasses three essential stages: \textit{attention rollout~(AR)}, \textit{feature refinement~(FR)}, and \textit{progressively focused loss functions~(PF Loss)}. The attention rollout process identifies highly correlated regions in paired images through ViT's cross-attention mechanism, postulating these regions to correspond to foreground objects. The attention refinement process aligns these regions to a consistent scale, facilitating effective feature extraction. Finally, the progressively focused loss functions guide the model in gradually emphasizing the foreground portions of the data. 

\textbf{Attention Rollout~(AR).}
The attention rollout process identifies highly correlated regions in paired images through ViT's cross-attention mechanism, postulating these regions to correspond to foreground objects. We formally prove in \ref{sec:app:attention_rollout} that attention rollout is bounded, convergent, and exhibits consistent foreground aggregation behavior under layer-wise accumulation. It provides an estimation of the relative importance of input tokens based on their corresponding attention weights. In this work, we extend the role of AR to guide the model's gradual focus on the foreground component of the data.
Specifically, AR extracts the regions emphasized by both self-attention and source-target cross-attention. For the source-target cross-attention rollout $\text{AR}_{j}^{st,l}$ of layer $l$, it is defined as $\text{AR}_{j}^{st,l}  = \sum_{i=1}^{N} \bar{\text{AR}}_{i,j}^{st,l}$,
\begin{equation}
   \small
   \begin{aligned}
      \bar{\text{AR}}_{i,j}^{st,l} & \!=\!
      \left\{
      \begin{aligned}
          & \bar{\text{AR}}_{i,j}^{st,l-1} \!+\! \sigma\left((z^{s,l}_{i} \cdot z^{t,l}_{j}) / \sqrt{d} \right) & l>1 \\
          & \sigma\left((z^{s,l}_{i} \cdot z^{t,l}_{j}) / \sqrt{d} \right)                                      & l=1
      \end{aligned}
      \right.
   \end{aligned},
   \label{eq_attentionrollout}
\end{equation}
where $\bar{\text{AR}}_{i,j}^{st,l}$ describes a patch-wise attention map between the $i$-th patch in the source domain and the $j$-th patch in the target domain. The function $\sigma(\cdot)$ denotes a softmax activation, which ensures that attention weights are normalized to a probability distribution. The embedding $z^{s,l}_{i}$ corresponds to the $i$-th patch's embedding in layer $l$ of the source domain, and $z^{t,l}_{j}$ is the embedding of the $j$-th patch in layer $l$ of the target domain, with $i,j \in \{1, \ldots, N\}$. $N$ represents the number of patches within image. 
A recursive process is used to compute the AR for layer $L$, which propagates information back to the initial layer for global aggregation. We analyze this process theoretically in \ref{sec:app:attention_rollout} and show that the attention rollout remains numerically stable and accumulates semantically consistent foreground regions.

\textbf{Feature Refinement~(FR).} FR operation focuses the model's perception on the foreground indicated by AR and includes two operations of box identification and box interpolation.
In the box identification operation, the attention rollout $\text{AR}_{j}^{st,l}$ provides interconnectedness information. The highlighted region~(with a high value of $\text{AR}_{j}^{st,l}$) indicates a high correlation between the $i$-th patch in the source domain and the $j$-th domain in the target domain.
Upsampling the attention map to the original image size helps locate the regions identified by the model as highly relevant for the given class, approximating the object's location in the image. The box identification operation localizes the object during training by utilizing the information in the attention map,
\begin{equation}
   [a_{<}^{s,l},a_{>}^{s,l},a_{\wedge}^{s,l},a_{\vee}^{s,l}] =\text{BI}(\text{AR}_{j}^{st,l}, \beta),
   \label{eqn_attn}
\end{equation}
where the scale of the bounding box is controlled by the threshold parameter $\beta \in [0, 1]$. The $a_{<}^{s,l}$, $a_{>}^{s,l}$, $a_{\wedge}^{s,l}$, $a_{\vee}^{s,l}$ are the left, right, high and low boundaries of the bounding box. The function $BI$ is defined to find a box of the attention map defined by $[a_{<}^{s,l}, a_{>}^{s,l}, a_{\wedge}^{s,l}, a_{\vee}^{s,l}]$, which is defined as,
\begin{equation}
   \small
   \left\{
   \begin{aligned}
      \!a_{<}^{s,l}      & \!=\! \arg \min \{\! I_R(j)\ |\ \text{AR}_{j}^{st,l} > \beta, j \in \{1, \ldots, N\} \!\}\! \\
      \!a_{>}^{s,l}      & \!=\! \arg \max \{\! I_R(j)\ |\ \text{AR}_{j}^{st,l} > \beta, j \in \{1, \ldots, N\} \!\}\! \\
      \!a_{\vee}^{s,l}   & \!=\! \arg \min \{\! I_C(j)\ |\ \text{AR}_{j}^{st,l} > \beta, j \in \{1, \ldots, N\} \!\}\! \\
      \!a_{\wedge}^{s,l} & \!=\! \arg \max \{\! I_C(j)\ |\ \text{AR}_{j}^{st,l} > \beta, j \in \{1, \ldots, N\}\!\}\!  \\
   \end{aligned}
   \right.
   \label{eqn_attn_2}
\end{equation}
where $I_R(j)$ and $I_C(j)$ map the patch index $j$ to the row and column index in the original image. Then, the average foreground rate $\Omega=\sum (a_{>}^{s,l}-a_{<}^{s,l})(a_{\wedge}^{s,l}-a_{\vee}^{s,l})/N^{p}$, where $N^p$ is the number of pixels of a single image. The stability of the box extraction process with respect to attention thresholding and attention noise is analyzed in \ref{sec:app:box_identification}. We show that the identified region varies piecewise-continuously with respect to threshold $\beta$ and is robust to small perturbations.

\begin{figure*}[t]
   \centering
   \includegraphics[width=0.99\linewidth]{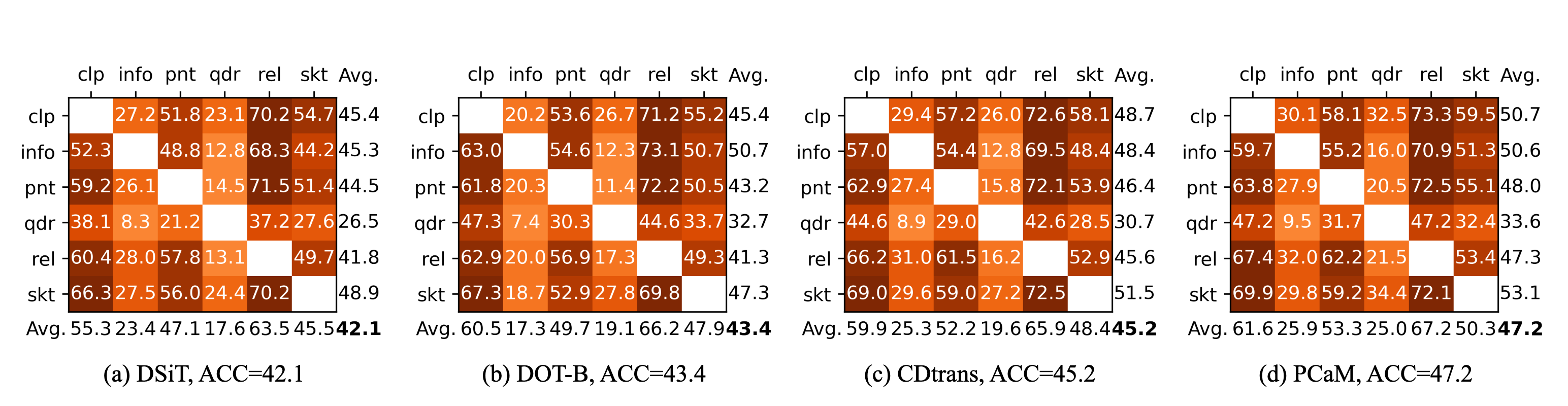}
   \vspace{-4mm}
   \caption{
      \textbf{Confusion matrix on DomainNet.} Comparison on confusion matrix with DsiT, DOT, and CDTrains on DomainNet. PCaM achieves the best performance on the DomainNet dataset. PCaM outperforms the second best method by \textbf{2.0\%}.
   }
   \label{fig:domainet}
\end{figure*}
\begin{table*}[t]
	\centering
	\caption{\textbf{Comparison of our approach with SOTA methods on Office-Home.} The highest acc is highlighted in \textbf{bold}. BL indicates the baseline of PCaM. PCaM outperforms the second best method by \textbf{0.7\%} outperforms the BL by \textbf{0.8\%}.}
	\label{tab:commands}
	{
		\footnotesize
		\begin{tabular}{l|cccccc|c}
			\toprule
			\textbf{Metric} & \textbf{CGDM} & \textbf{DOT}  & \textbf{DSiT} & \textbf{CDTrans} & \textbf{DPP\&BST} & \textbf{C-SFTrans} & \textbf{PCaM}   \\
			Year            & 2021          & 2022          & 2023          & 2023             & 2024              & 2024               & Ours            \\
			\midrule
			Ar2Cl           & 67.1          & 69.0          & 69.2          & 68.8             & 69.8              & 70.3               & \textbf{71.2}   \\
			Ar2Pr           & 83.9          & 85.6          & 83.5          & 85.0             & 84.7              & 83.9               & \textbf{86.2}   \\
			Ar2Re           & 85.4          & 87.0          & 87.3          & 86.9             & 87.0              & 87.3               & \textbf{87.6}   \\
			Cl2Ar           & 77.2          & 80.0          & 80.7          & 81.5             & 80.5              & 80.2               & \textbf{82.0}   \\
			Cl2Pr           & 83.3          & 85.2          & 86.1          & 87.1             & 86.8              & 86.9               & \textbf{87.5}   \\
			Cl2Re           & 83.7          & 86.4          & 86.2          & 87.3             & 86.5              & 86.1               & \textbf{87.6}   \\
			Pr2Ar           & 74.6          & 78.2          & 77.9          & 79.6             & 78.3              & 78.9               & \textbf{79.8}   \\
			Pr2Cl           & 64.7          & \textbf{65.4} & 67.9          & 63.3             & 65.1              & 65.0               & 65.3            \\
			Pr2Re           & 85.6          & 87.9          & 86.6          & 88.2             & 88.1              & 87.7               & \textbf{88.4}   \\
			Re2Ar           & 79.3          & 79.7          & \textbf{82.4} & 82.0             & 81.8              & \textbf{82.6}      & \textbf{82.4}   \\
			Re2Cl           & \textbf{69.5} & 67.3          & 68.3          & 66.0             & 66.9              & 67.9               & 66.3            \\
			Re2Pr           & 87.7          & 89.3          & 89.8          & 90.6             & 90.1              & 90.2               & \textbf{90.7}   \\
			\midrule
			AVE             & 78.5          & 80.1          & 80.5          & 80.5             & 80.6              & 80.6               & \textbf{81.3}   \\
			$\Delta$        &               &               &               &                  &                   &                    & \textbf{(+0.7)} \\
			\bottomrule
		\end{tabular}
	}
	\label{tab:officehome}
\end{table*}


\begin{table*}[h]
	\centering
	\caption{\textbf{Comparison of our approach with SOTA methods on VisDA-2017.} PCaM outperforms the second best method by \textbf{1.1\%} and outperforms BL by \textbf{3.0\%}. 
		The highest acc is highlighted in \textbf{bold}.}
	\footnotesize
	{
		\label{tab_abs_visda}
		\begin{tabular}{l|cccccc|c}
			\toprule
			\textbf{Metric} & \textbf{CGDM} & \textbf{DOT}  & \textbf{DSiT} & \textbf{CDTrans} & \textbf{DPP\&BST} & \textbf{C-SFTrans} & \textbf{PCaM}   \\
			Year            & 2021          & 2022          & 2023          & 2023             & 2024              & 2024               & Ours            \\
			\midrule
			plane           & 96.0          & \textbf{99.3} & 98.2          & 97.7             & 98.0              & 98.6               & 98.1            \\
			bcycl           & 87.1          & {92.7}        & 90.9          & 86.4             & 90.2              & 91.5               & \textbf{93.2}   \\
			bus             & 86.8          & 89.0          & 89.1          & 86.9             & 87.5              & 88.5               & \textbf{90.1}   \\
			car             & \textbf{86.8} & 78.8          & 82.0          & 83.3             & 84.0              & 84.5               & 89.4            \\
			horse           & 92.2          & 98.2          & 97.9          & 97.8             & 97.9              & 98.4               & \textbf{98.8}   \\
			knife           & \textbf{98.3} & 96.1          & 97.4          & 97.2             & 97.6              & 97.9               & 97.3            \\
			mcycl           & 91.6          & 93.1          & 94.8          & 95.9             & 95.3              & 95.1               & \textbf{96.0}   \\
			person          & 78.5          & 80.2          & 81.9          & {84.1}           & 83.4              & 83.2               & \textbf{84.6}   \\
			plant           & 96.3          & 97.6          & 97.7          & 97.9             & 97.2              & 97.5               & \textbf{98.0}   \\
			sktbrd          & 48.4          & \textbf{95.8} & 91.2          & 83.5             & 90.7              & 91.7               & 92.1            \\
			train           & 89.4          & 94.4          & 94.7          & 94.6             & 94.8              & 95.0               & \textbf{95.4}   \\
			truck           & 39.0          & \textbf{69.0} & 60.0          & 55.3             & 61.2              & 63.5               & 63.8            \\
			\midrule
			AVE             & 82.3          & 90.3          & 88.5          & 88.4             & 89.0              & 89.6               & \textbf{91.4}   \\
			$\Delta$        &               &               &               &                  &                   &                    & \textbf{(+3.0)} \\
			\bottomrule
		\end{tabular}
	}
	\label{tab:visda}
\end{table*}
\begin{figure*}[t]
   \centering
   \includegraphics[width=0.99\linewidth]{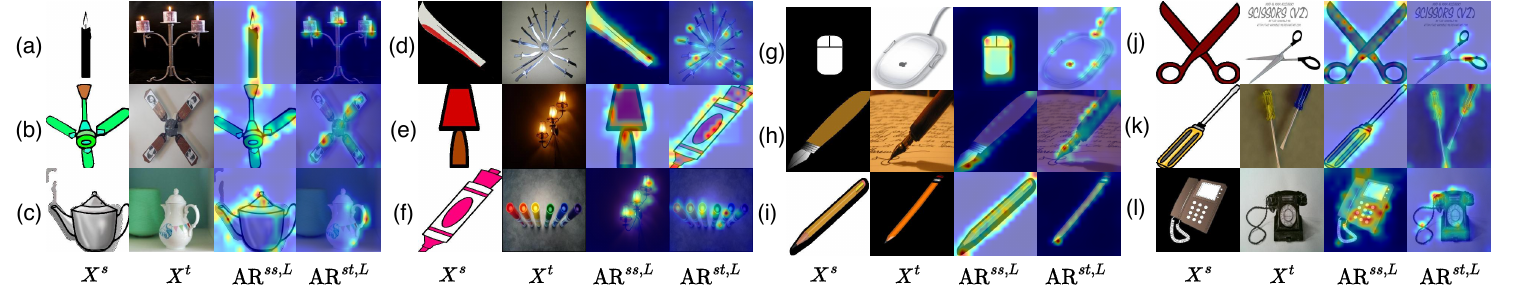}
   \vspace{-4mm}
   \caption{\textbf{Visualization of attention rollout on the Office-Home.} Here, $ {X}^{s}$ and ${X}^{t}$ represent the image data from the source and target domains respectively. $\text{AR}^{ss, L}$ visualizes the feature map of the last layer of self-attention in the source domain, whereas $\text{AR}^{st, L}$ visualizes the feature map of cross-attention in the target domain. PCaM effectively aligns the attention maps of the source and target domains, facilitating feature extraction.}
   \vspace{-3mm}
   \label{tab:fig_feature}
\end{figure*}
In the box interpolation operation, the box identified by the box identification is used to crop the original image. The cropped image features $z^{\text{C}} = z{[a_{<}^{s,l}:a_{>}^{s,l}, a_{\vee}^{s,l}:a_{\wedge}^{s,l}]}$ is then resized to the original size $z^{\text{FR}}$. The box interpolation is defined as,
\begin{equation}
   \small
   z^{\text{FR}}_{n, m}=\sum_{i=0}^1 \sum_{j=0}^1 z^{\text{C}}_{n_0+i, m_0+j} w_{i j}(m,n,m_0,n_0),
   \label{eqn_w}
\end{equation}
where $(m,n)$ are the coordinates of the pixel at the new location, $(m_0,n_0)$ are the coordinates of the closest pixel in the original image, and the interpolation weight $w_{i j}(m,n,m_0,n_0)$ is given by,
\begin{equation}
   \small
   \frac
   {
      \left(\!1\!-\!\left|m-\left(m_0+i\right)\right|\right)\left(1-\left|n-\left(n_0+j\right)\right|\right)
   }
   {
      \!\sum_{k=0}^1 \!\sum_{l=0}^1\!
      \left(1-\left|m\!-\!\left(x_0\!+\!k\right)\right|\right)
      \left(1-\left|n\!-\!\left(n_0\!+\!l\right)\right|\right)
   }.
\end{equation}

\vspace{2mm}
\subsection{Progressively Focused Loss Functions}

Cross-attention maps are often affected by noise, which can reduce the model's ability to focus accurately on critical regions. To address this, we propose a progressively focused loss function, $\mathcal{L}_\text{PF}$, which directs the model to concentrate attention on a continuous, localized area, thereby improving feature extraction from essential regions,
\begin{equation}
   \small
   \begin{aligned}
      \! \mathcal{L}_\text{PF} & \!=\! - \sum_l \sum_{m,n=1}^{N}
      \frac
      {\|\text{AR}_{\mathbf{J}(m,n)}^{st,l}-\text{AR}_{\mathbf{J}(\lfloor m^l_c \rfloor ,\lfloor n^l_c\rfloor)}^{st,l}\|_2^2}
      {\sqrt{(m-m^l_c)^2+ (n - n^l_c)^2}},                                                               \\
      \! m^l_c                 & \!=\! \frac{1}{N^2} \sum_{m,n=1}^N{\!m\text{AR}_{\mathbf{J}(m,n)}^{st,l}},
      n^l_c \!=\! \frac{1}{N^2} \sum_{m,n=1}^N{\!n\text{AR}_{\mathbf{J}(m,n)}^{st,l}},
   \end{aligned}
   \label{eqn:PF}
\end{equation}
where $\mathbf{J}(m,n)$ maps pixel coordinates to patch indices, supporting alignment between pixels and patches. The terms $m^c$ and $n^c$ represent the attention map's center of mass, while $(\lfloor m^c \rfloor, \lfloor n^c \rfloor)$ provides the closest pixel coordinates to this center.

The $\mathcal{L}_\text{PF}$ function progressively guides the model to intensify attention on pixels near the center of mass, reducing sensitivity to noise and ensuring a more focused attention distribution. 
By minimizing the weighted distance between attention values at $(m, n)$ and the center, $\mathcal{L}_\text{PF}$ facilitates a coherent attention pattern that prioritizes key foreground regions, ultimately enhancing feature precision and model robustness. Its differentiability, gradient directionality, and regularization effect are formally proved in \ref{sec:app:progressively_focused_loss}.

\vspace{2mm}
\subsection{The Optimization of PCaM}

The training objective for PCaM is given by,
\begin{equation}
   \mathcal{L} = \mathcal{L}_\text{cls}^{s} + \mathcal{L}_\text{dst} + \mathcal{L}_\text{cls}^{t} + \mathcal{L}_\text{PF},
   \label{eqn:da-all}
\end{equation}
where source domain classification loss $\mathcal{L}_{cls}^{s}$, the distillation loss $\mathcal{L}_{dst}$, and $\mathcal{L}_{cls}^{t}$ target domain classification loss are from CDTrans~\cite{xu2021cdtrans}. The classification loss for the source domain is defined as,
\begin{equation}
   \mathcal{L}_\text{cls}^{s} =
   -\sum_{j = 1}^C \bm{y}^s\log\left(\sigma\left({g\left(\bm{f}^{s}\right)}\right)\right),
   \label{eqn:da-source}
\end{equation}
where $\sigma(\cdot)$ is the softmax active function, $\bm{y} = [y_1, \dots, y_C]$ is a set of label vectors. And, the $\bm{f}^s$ is the feature of data $x^s\in \mathcal{X}^{s}$ in the source domain.

The ViT is encouraged to learn the robust class feature, which is consistent with the content ($\bm{Q}$) and invariant to the style ($\bm{K}/\bm{V}$).
For $\bm{f}^{s\rightarrow t}$ and $\bm{f}^t$, the ground-truth label is not accessible, as they encode the class feature from the target query. Nevertheless, $\bm{f}^{s\rightarrow t}$ can carry some reliable class information because it is translated with the source style (K/V), and $\bm{f}^s$ is trained with the ground-truth label. Therefore, the prediction of $\bm{f}^{s\rightarrow t}$ can be used as the teacher to supervise the prediction of $\bm{f}^t$,
\begin{equation}
   \mathcal{L}_{dst} =
   -\sum_{j = 1}^C \sigma(g(\bm{f}^t) / \tau) \log\left(\sigma\left(g\left(\bm{f}^{s\rightarrow t}\right) / \tau\right) \right),
   \label{eqn:da-target}
\end{equation}
where $\tau > 0$ is the temperature for distillation~\cite{wang2025multi}. Finally, the pseudo labeling strategy, e.g., SHOT~\cite{liang2020we} can be used to learn the target feature $\bm{f}^t$,
\begin{equation}
   \mathcal{L}_{cls}^t =
   -\sum_{j = 1}^C \tilde{\bm{y}}^t\log(\sigma(g(\bm{f}^t))),
   \label{eqn:da-pseudo}
\end{equation}
where $\tilde{\bm{y}}^t$ is the pseudo label for $\bm{x}^t$.
When model inference. Directly feed test data into ViT without additional matching. This is because the PCaM proposed in this paper is only used to bootstrap stable representations that are invariant to the learning domain without affecting the model.

\section{Experiments}
\label{sec:experiments}
\textbf{Datasets.} Our research employs widely-used datasets for domain adaptation, including popular benchmarks such as Office-Home~\cite{venkateswara2017Deep}, VisDA-2017~\cite{peng2017visda}, and DomainNet~\cite{peng2018moment}. The details are shown in the appendix.

\subsection{Implementation Details.}
To ensure comparability with existing SOTA methods, all the backbones are initialized with ImageNet-1K pretrained weights. During training, we use SGD algorithm~\cite{smith2024origin} with a momentum of 0.9 and a weight decay ratio of 1e-4. The details are shown in appendix.

\vspace{2mm}
\subsection{Exploratory Experiment}

To investigate whether the proposed scheme works, we designed the Exploratory Experiment, which consists of two parts to demonstrate the work of Attention Rollout and Feature Refinement.

\textbf{Attention Rollout~(AR).}
The cross-attention rollouts of the Office-Home dataset are shown in Fig.~\ref{tab:fig_feature}. Each set contains four images: $X^s$, $X^t$, $\text{AR}^{ss,L}$, and $\text{AR}^{st,L}$. We observe that the cross-attention is well-focused on the foreground objects, which indicates that our concentration loss can work well to focus attention on a local range. Cropping on such a basis can effectively exclude the noise factor in the background. For example, in Fig.~\ref{tab:fig_feature}j, the presence of advertising text in the scissor image of the target domain may interfere with cross-domain recognition. PCaM can exclude this type of interference by rolling out cross-attention.

\textbf{Feature Refinement~(FR).}
We further examined the progression of feature refinement during training. As shown in Fig.\ref{tab:fig_Exploratory_Experiment}a, we tracked the percentage of cropped pixels on the VisDA-2017 dataset. The results reveal a trend where model accuracy steadily improves as training progresses, coinciding with an increasing focus on the central regions. This pattern suggests that the focus loss is effectively guiding the model's attention toward the foreground, thereby enhancing accuracy. Fig.\ref{tab:fig_Exploratory_Experiment}b provides a specific example: the PCaM's crop boundary (yellow box) gradually centers on the truck in the foreground under the influence of cross-attention, successfully excluding distracting background elements like pedestrians. This refinement underscores the role of PCaM in progressively enhancing recognition accuracy by isolating relevant foreground information.

\begin{table*}[t]
	\footnotesize
	\centering
	\caption{\textbf{Comparison of our approach on Remote Sensing dataset~(AID$\to$NWPU).}
		The highest acc is highlighted in \textbf{bold}. AVE is the average performance. The PCaM method outperforms the baseline by \textbf{4.9\%}.
	}
	\begin{tabular}{l|c c c c c |cc}
		\toprule
		Category                 & UDA  & CMU  & I-UAN         & MA            & CDTrans       & PCaM          \\
		\midrule
		Farmland (Fa.)           & 91.7 & 89.6 & 96.0          & 93.9          & 93.9          & \textbf{96.3} \\
		Forest (Fo.)             & 92.6 & 83.9 & \textbf{96.3} & 87.4          & \textbf{95.4} & \textbf{95.4} \\
		Desert (DR.)             & 78.3 & 80.4 & 78.9          & 64.1          & 90.0          & \textbf{90.9} \\
		River (Ri.)              & 71.3 & 76.4 & 74.9          & 78.6          & 77.0          & \textbf{86.0} \\
		Park (Pa.)               & 81.1 & 84.0 & 86.3          & 89.0          & 94.1          & \textbf{98.4} \\
		Industrial area (In.)    & 77.3 & 71.0 & 75.9          & \textbf{83.4} & 82.3          & 83.0          \\
		Beechwood (Be.)          & 91.6 & 88.6 & 94.9          & 95.4          & 98.7          & \textbf{99.4} \\
		Medium Residential (MR) & 70.0 & 79.3 & 83.0          & \textbf{88.0} & 74.3          & 82.0          \\
		Sparse Residential (SR) & 83.0 & 87.0 & 86.6          & 82.5          & 94.7          & \textbf{95.1} \\
		Airport (Ai.)            & 45.1 & 39.0 & 51.0          & 58.0          & 71.0          & \textbf{82.6} \\
		Bridge (Br.)             & 81.3 & 86.7 & 90.3          & 94.7          & 95.9          & \textbf{97.7} \\
		Baseball field (Ba.)     & 74.6 & 73.3 & 78.7          & 79.0          & 85.6          & \textbf{92.6} \\
		Church (Ch.)             & 73.3 & 74.0 & \textbf{83.4} & \textbf{76.3} & 65.0          & 82.7          \\
		Dense Residential (De.)  & 78.7 & 72.0 & 80.7          & 79.9          & 85.7          & \textbf{86.3} \\
		Meadow (Me.)             & 80.1 & 75.6 & 88.4          & \textbf{91.6} & 45.7          & 56.6          \\
		Mobile Home Park (Mo.)   & 85.4 & 89.7 & 81.7          & 70.7          & \textbf{97.7} & \textbf{97.7} \\
		Runway Strip (RS)        & 82.1 & 87.4 & 84.1          & 84.9          & 84.3          & \textbf{88.9} \\
		Storage Tank (St.)       & 50.0 & 60.4 & 81.0          & 46.0          & 94.3          & \textbf{96.4} \\
		Stadium (ST)             & 88.4 & 94.0 & 95.3          & 92.9          & 96.2          & \textbf{97.6} \\
		Commercial area (Co.)    & 37.4 & 54.1 & 57.0          & 60.1          & 53.0          & \textbf{65.4} \\
		\midrule
		AVE                      & 75.7 & 77.3 & 82.2          & 79.8          & 83.7          & \textbf{88.6} \\
		$\Delta$        &            &         &         &        &                         & \textbf{(+4.9)}      \\
		\bottomrule
	\end{tabular}
	\label{remotesensing}
\end{table*}
\begin{table*}[h]
	\centering
	\caption{\textbf{Ablation studies on VisDA-2017.} The highest acc is highlighted in \textbf{bold}. PCaM outperforms the baseline by \textbf{3.0\%}.}
	\footnotesize
	{
		\begin{tabular}{l|c c c c c c c}
			\toprule
			Category & Baseline      & +Crop  & +Weight & +Det   & +Det          & +Det          & PCaM            \\
			         & (CDTrans)     & Fore.  & Patches &        & +AR           & +AR+FR        & (Full model)    \\
			\midrule
			plane    & 97.7          & 97.9   & 98.1    & 97.9   & 97.9          & \textbf{98.5} & 98.1            \\
			bcycl    & 86.4          & 88.1   & 87.9    & 89.6   & 90.1          & 91.3          & \textbf{93.2}   \\
			bus      & 86.9          & 87.3   & 87.1    & 87.2   & 87.6          & \textbf{90.3} & 90.1            \\
			car      & 83.3          & 84.3   & 84.3    & 83.5   & 83.7          & 84.3          & \textbf{89.4}   \\
			horse    & 97.8          & 97.7   & 98.1    & 97.8   & 97.9          & 98.4          & \textbf{98.8}   \\
			knife    & 97.2          & 97.3   & 97.6    & 97.1   & 97.1          & 96.5          & \textbf{97.3}   \\
			mcycl    & 95.9          & 95.8   & 95.3    & 95.9   & 95.3          & 94.4          & \textbf{96.0}   \\
			person   & 84.1          & 84.2   & 85.1    & 84.3   & 84.1          & 80.1          & \textbf{84.6}   \\
			plant    & \textbf{97.9} & 96.4   & 95.1    & 98.0   & \textbf{98.6} & \textbf{98.0} & \textbf{98.0}   \\
			sktbrd   & 83.5          & 81.9   & 82.1    & 84.5   & 91.2          & \textbf{93.4} & 92.1            \\
			train    & 94.6          & 94.5   & 94.6    & 94.7   & 93.7          & \textbf{95.6} & 95.4            \\
			truck    & 55.3          & 56.2   & 56.9    & 56.3   & 59.1          & 62.2          & \textbf{63.8}   \\ \midrule
			AVE      & 88.4          & 88.5   & 88.5    & 88.9   & 89.7          & 90.2          & \textbf{91.4}   \\
			$\Delta$ & ---           & (+0.1) & (+0.1)  & (+0.5) & (+1.3)        & (+1.8)        & \textbf{(+3.0)} \\
			\bottomrule
		\end{tabular}
	}
	\label{tab_abs}
\end{table*}

\vspace{2mm}
\subsection{Comparision with SOTAs on UDA benchmarks}

We compare our PCaM to various UDA methods, such as CGDM(2021)~\cite{Zhekai2018Zhekai}, DOT(2022)~\cite{DOT_2022_mm}, CDTrans(2023)~\cite{xu2021cdtrans}, DsiT(2023)~\cite{CDSiT_2023_ICCV}, DPP\&BST~\cite{luo2025explicitly}, and C-SFTrans(2024)~\cite{C_SFTrans_2023}  with the the backbones of DeiT-base. We used ImageNet-1k as our pre-training dataset for each model. Some methods that utilize stronger pre-trained models are not included in the comparison~\cite{hoyer2023mic,ELS_2023_ICLR}.

\textbf{Comparision on Classic UDA Datasets (Office-Home).} PCaM trains a DeiT base directly on the source domain and pseudo-label. We conducted comparative experiments to verify the effectiveness of PCaM on Office-Home, as shown in Table~\ref{tab:officehome}, where we achieve comparable and even better performance compared to SOTA UDA methods. In addition, PCaM also outperformes CDTrans (+0.8\%) due to the contribution of the proposed PCaM method. The average accuracy of our method is 81.3\%, which achieves the best performance on six subsets when applied to CDTrans in Office-Home.

\begin{figure}
   \centering
   \includegraphics[width=0.70\linewidth]{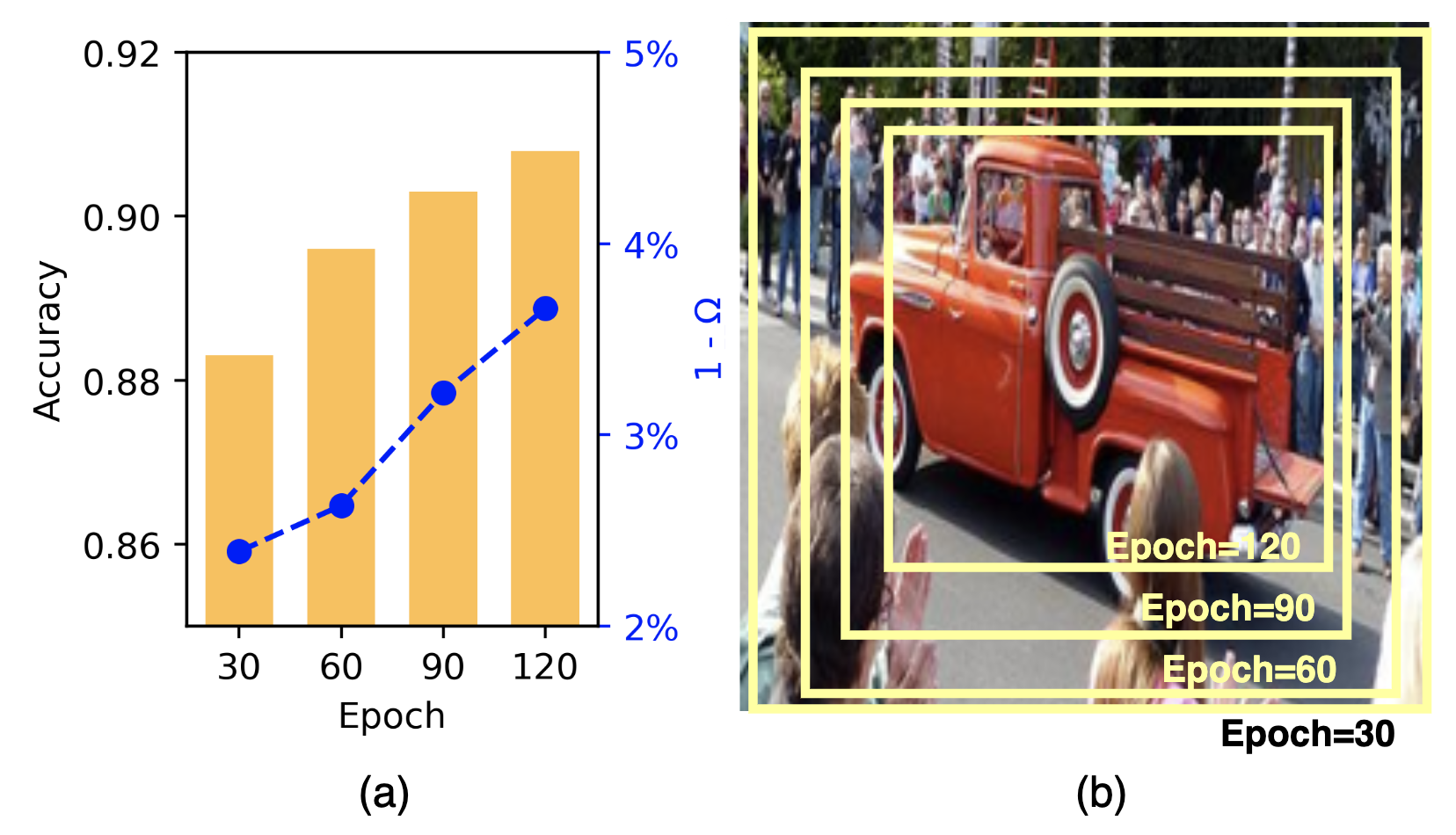}
   \caption{\textbf{Exploratory experiment of PCaM.} (a) Barplot and line plot illustrating performance as the number of training epochs increases. (b) The cross-attention mechanism of PCaM highlights regions corresponding to different epoch frames.}
   \label{tab:fig_Exploratory_Experiment}
\end{figure}

\textbf{Comparison on UDA Datasets with Large Domain Differences (VisDA-2017).}
As shown in Table~\ref{tab:visda}, our proposed PCaM achieves the highest average accuracy of {91.4\%} on the VisDA-2017 dataset, outperforming the previous best method C-SFTrans (2024) by {1.1\%} and the widely adopted CDTrans (2023) by a significant margin of {3.0\%}. This result highlights the effectiveness of our method in addressing challenging domain adaptation settings with substantial domain shifts.
Further examination of per-class performance reveals notable gains in categories that are severely affected by Foreground Object Mismatch (FOM), such as “\texttt{sktbrd}” and “\texttt{bcycl}.” Specifically, PCaM achieves an accuracy of {92.1\%} on “\texttt{sktbrd},” representing an absolute improvement of {8.6\%} over CDTrans (83.5\%), and {93.2\%} on “\texttt{bcycl},” a {6.8\%} improvement over CDTrans (86.4\%). These categories typically suffer from small foreground objects in cluttered scenes, leading to background-dominated attention in prior methods.
By progressively refining attention to foreground regions and introducing a concentration-aware loss, PCaM mitigates the influence of background noise and improves feature alignment across domains. Overall, these results confirm that PCaM not only outperforms ViT-based UDA baselines but also provides stronger generalization on categories sensitive to spatial mismatch.

\begin{figure}
   \centering
   \includegraphics[width=0.60\linewidth]{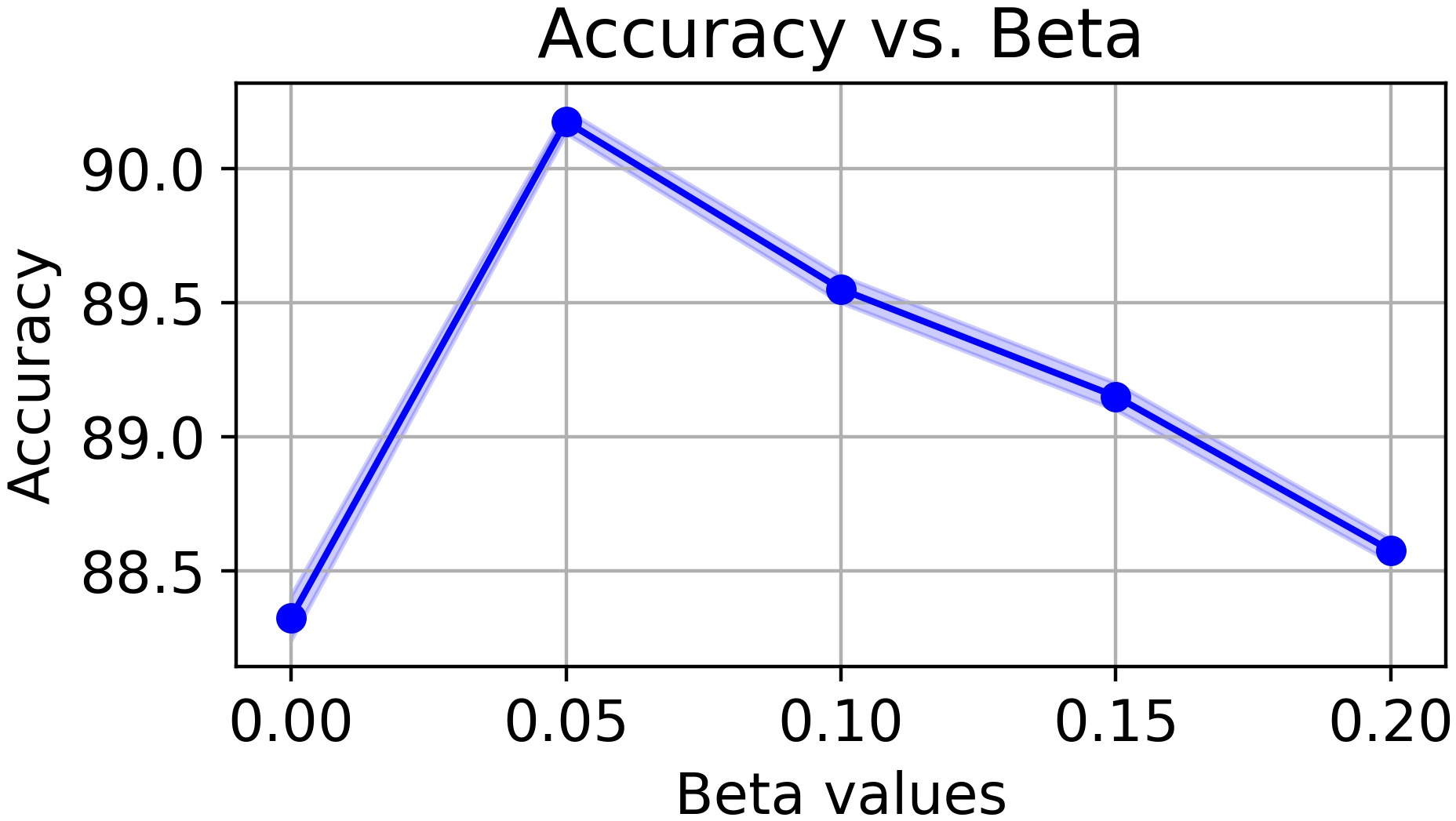}
   \caption{{\textbf{Parameter analysis.} The parameter analysis of parameter $\beta$~(Beta) in Eq.~(\ref{eqn_attn}) on VisDA-2017.} The performance of PCaM is stable when $\beta$ is around 0.05.}
   \label{tab:sensitive}
\end{figure}

\textbf{Greater Advantage on More Complex UDA Datasets~(DomainNet).} On large datasets, the average performance of our methods on DomainNet is still significantly better than other approaches, as shown in Fig.~\ref{fig:domainet}. Our method has an average accuracy of 47.2\%, which exceeds the SOTA results and outperforms CDTrans (+2.0\%); meanwhile, we get about 3.8\% better accuracy compared to DOT-B. From this result, our model can still adapt more accurately in the target domain when faced with such a large dataset. We confirm that our proposed method works successfully for UDA tasks. This shows that PCaM is generally applicable to different domain adaptation tasks.

\textbf{Comparision on Remote Sensing Dataset.} To verify the performance of PCaM on remote sensing datasets, we performed additional experiments using the AID~\cite{xia2017aid}$\to$NWPU~\cite{wang2020nwpu} dataset. The baseline methods include UDA(2019)~\cite{UDA}, CMU(2020)~\cite{CMU}, I-UAN(2021)~\cite{I-UAN}, CDTrans(2023)~\cite{xu2021cdtrans}, and MA(2023)~\citet{Xu_2023}, The results are shown in Table.~\ref{remotesensing}, where we can see that PCaM largely outperforms both the SOTA method~(MA~\cite{Xu_2023}) and our baseline method. This suggests that PCaM is able to show advantages in the more severe problems of the FOM problem.

\textbf{Minor Increase in Computational Cost Achieves Significant Performance Gains} The computational cost results, illustrated in Fig.~\ref{tab_cost}, show that PCaM incurs a slight increase in computation time compared to baseline methods. However, this modest trade-off yields substantial performance improvements, making it advantageous for applications that prioritize accuracy.
\begin{table}[t]
    \centering
    \caption{\textbf{The computational cost of PCaM on VisDA \& DomainNet.} PCaM does have a slightly higher computational cost than baseline methods. However, the notable improvement in performance justifies this minor increase. (m: minutes; s: seconds)}
    \begin{tabular}{@{}l|cc|cc@{}}
        \toprule
                & \begin{tabular}[c]{@{}c@{}}Train\\(10 epoch)\end{tabular} & \begin{tabular}[c]{@{}c@{}}Test\\(14k sample)\end{tabular} & \begin{tabular}[c]{@{}c@{}}VisDA\\ (Acc)\end{tabular} & \begin{tabular}[c]{@{}c@{}}DomainNet\\ (Acc)\end{tabular} \\ \midrule
        CDtrans & \textbf{9m45s}                                          & \ \ \ \ \ \ \ \textbf{21s}                                 & 88.4                                                  & 45.2                                                      \\
        PCaM    & 10m06s(+21s)                                            & \ \ \ \ \ \ \ \textbf{21s}                                 & \textbf{91.4(+3.0)}                                   & \textbf{47.2(+2.0)}                                       \\
        \bottomrule
    \end{tabular}
    \label{tab_cost}
    \vspace{-3mm}
\end{table}

\vspace{2mm}
\subsection{Ablation Study \& Discustions}

\textbf{Overall Ablation Study.} We trained a DeiT-base model on both the source and target domains using the pseudo-labeling method~\cite{liang2020we,xu2021cdtrans} and conducted comparative experiments on the VisDA-2017 dataset (see Table \ref{tab_abs}) to evaluate the performance of the proposed PCaM framework. The results show that PCaM achieved a significant improvement over CDTrans, with a performance gain of +3.0\%, primarily attributed to the contributions of the PCaM modules. We also conducted ablation studies based on the PCaM framework to evaluate the contributions of individual components such as Detection Enhancement (Det), Attention Refinement (AR), and Feature Refinement (FR). The results demonstrate incremental improvements from each component, confirming the robustness and effectiveness of the framework. These findings validate that the PCaM framework achieves SOTA performance on the VisDA-2017 dataset.

\textbf{Existence  of Foreground Mismatch.} On the `bcycl', `bus', `sktbrd', and `truck' classes of VisDA-2017, there is more background information in the target dataset, which causes a mismatch between the source domain and target domain. CDTrans is less effective on these four datasets, and our method enhances feature learning by gradually focusing on the same class of target objects. The alignment of features is enhanced by aligning the truck of the source domain with the truck of the target region, removing interfering information in the target domain. There are many foreground mismatch cases in the VisDA-2017 dataset, and our PCaM significantly improves these datasets. For example, as shown in Table~\ref{tab_abs_visda}, our method improves by 5.7\% on the `bcycl' dataset, 3.2\% on the `bus' dataset, 8.5\% on the `sktbrd' dataset, and by 7.5\% on the `truck' dataset. The experiments further demonstrate that our PCaM aligns features of the same class of targets, improving the problem of foreground mismatch.

\textbf{Stability \& Parameter Analysis.} In our method, the progressively focused cross-attention mechanisms are used to guide the generation of feature refinement. The threshold $\beta$ determines the size of the localization box, with a larger $\beta$ resulting in a smaller box. However, performance degrades rapidly as $\beta$ increases above 0.2. We suspect that smaller bounding boxes dramatically reduce the variance of the views, making it trivial to learn discriminative features. We tried different thresholds; the results are shown in Fig.~\ref{tab:sensitive}.

\textbf{PCaM Beyond directly Cropping Foreground \& Weight Patches.}
As shown in Table.~\ref{tab_abs}, PCaM mitigates the FOM problem by adapting to the foreground by gradually focusing during training, in order to demonstrate the novelty of this scheme. We tested two more direct schemes, CDTrans+Crop Foreground directly using the foreground recognition method \cite{wang2025multidimensional}, and CDTrans +Weight Patches directly weighting Patches according to the absolute value of AR. The results show that both methods are able to deliver a boost, which is a side effect of the fact that the FOM problem does exist. However, the boost brought by both methods is not obvious, which indicates that PCaM can solve this problem better.

\begin{figure}[t]
   \centering
   \includegraphics[width=0.69\linewidth]{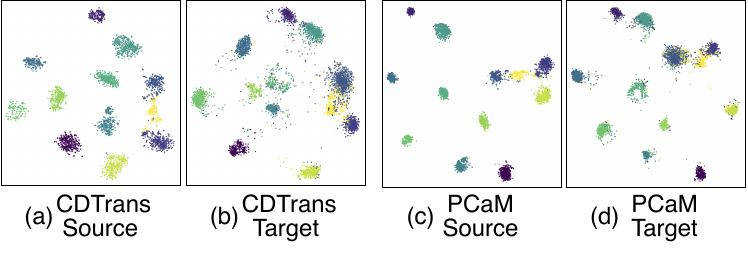}
   \caption{\textbf{Scatter visualization of CDTrans and PCaM\@}: t-SNE visualization of domain alignment effects. PCaM shows better domain alignment than CDTrans.}
   \label{tab:fig_Experiment}
\end{figure}

\textbf{Domain Alignment \& t-SNE visualization.}
In Fig.~\ref{tab:fig_Experiment}, we illustrate the feature distributions learned by CDTrans and PCaM using t-SNE embeddings on the VisDa-2017 dataset, based on the activations from the final fully connected layer. Compared to CDTrans, PCaM achieves clearer separation of target classes, with larger inter-class margins and tighter intra-class clusters. This underscores PCaM's effectiveness as a robust solution for unsupervised domain adaptation.

\subsection{Pseudo-Label Robustness Ablation}
\label{sec:4exper}
To assess the robustness of our method under noisy pseudo labels, we design a pseudo-label perturbation experiment. Specifically, we simulate various degrees of label noise by randomly altering a proportion of the pseudo labels used for pairing in the target domain. The corruption ratio $\gamma$ denotes the pseudo-label accuracy retained after noise injection.

We compare PCaM with CDTrans under pseudo-label accuracies of $\gamma \in {100\%, 90\%, 80\%, 70\%, 60\%, 50\%}$. As shown in Fig.X and TableX, PCaM consistently outperforms CDTrans under all corruption levels. Even when only 50\% of the pseudo labels are correct, PCaM maintains a relatively high accuracy, demonstrating its robustness against label noise.

This robustness comes from the dual-filtering strategy in PCaM, which combines both pseudo-label agreement and feature similarity thresholding to mitigate the impact of erroneous sample pairs.

\begin{table}[t]
   \centering
   \caption{\textbf{Robustness under pseudo-label noise.} 
   Accuracy (\%) on VisDA-2017 with varying pseudo-label accuracy $\gamma$. 
   PCaM exhibits better robustness than CDTrans under severe pseudo-label noise.}
   \label{tab:robustness}
   \vspace{1mm}
   \begin{tabular}{l|cccccc}
       \toprule
       \textbf{Method} & \textbf{100\%} & \textbf{90\%} & \textbf{80\%} & \textbf{70\%} & \textbf{60\%} & \textbf{50\%} \\
       \midrule
       CDTrans         & 88.4 & 86.7 & 84.1 & 81.0 & 76.4 & 70.2 \\
       \textbf{PCaM}   & \textbf{91.4} & \textbf{90.6} & \textbf{89.1} & \textbf{86.5} & \textbf{84.0} & \textbf{79.6} \\
       \bottomrule
   \end{tabular}
\end{table}

\vspace{-2mm}
\section{Conclusion}
\label{sec:conclusion}
In conclusion, this paper presents a novel method, Progressively Focused Cross-Attention Mechanisms (PCaM), for addressing the problem of noisy pseudo-labels in category-level-based alignment for Unsupervised Domain Adaptation (UDA). While PCaM may slightly increase the computational time, it extracts foreground information and removes irrelevant background information, effectively aligning the source and target domains' features and improving the scale mismatch problem. The experimental results show that our method outperforms several SOTA UDA methods, including VisDA-2017 and DomainNet benchmarks. 

{
    \small
    \bibliographystyle{plain}
    \bibliography{egbib}
}

\appendix

\tableofcontents

\section{Related Works}

\section{Theoretical Analysis of Attention Rollout} \label{sec:app:attention_rollout}

In this section, we theoretically analyze the attention rollout (AR) mechanism used in PCaM and prove several important properties: (1) boundedness and stability across layers; (2) convergence of the cumulative attention; and (3) the tendency to accumulate attention on semantically consistent foreground patches.

We consider the cross-attention rollout at layer $l$ between source patch $i$ and target patch $j$, defined recursively as:
\begin{equation}
\label{eq:ar_def}
\bar{\text{AR}}_{i,j}^{l} =
\begin{cases}
\sigma\left(\frac{\bm{z}_{i}^{s,1} \cdot \bm{z}_{j}^{t,1}}{\sqrt{d}}\right), & l = 1, \\
\bar{\text{AR}}_{i,j}^{l-1} + \sigma\left(\frac{\bm{z}_{i}^{s,l} \cdot \bm{z}_{j}^{t,l}}{\sqrt{d}}\right), & l > 1,
\end{cases}
\end{equation}
where $\sigma(\cdot)$ denotes the softmax activation over all $j \in \{1, \ldots, N\}$ for fixed $i$.

We now present three core results.

\begin{lemma}[Boundedness and Non-negativity]
\label{lemma:boundedness}
For all $l \geq 1$ and all $i, j \in \{1, \ldots, N\}$, the attention rollout satisfies:
\[
0 \leq \bar{\text{AR}}_{i,j}^{l} \leq l.
\]
\end{lemma}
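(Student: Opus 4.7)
The plan is to prove the bound by a short induction on the layer index $l$, exploiting the fact that the attention rollout in Eq.~\eqref{eq:ar_def} is built by accumulating softmax contributions, each of which lies in $[0,1]$. Before invoking induction, I would fix the normalization convention: since $\sigma(\cdot)$ in Eq.~\eqref{eq:ar_def} denotes a softmax taken over the target index $j$ with the source index $i$ held fixed, each individual entry $\sigma\bigl((\bm{z}^{s,l}_{i}\cdot\bm{z}^{t,l}_{j})/\sqrt{d}\bigr)$ is automatically non-negative (since $\exp(\cdot) > 0$) and bounded above by $1$ (since the entries along $j$ sum to $1$). This simple observation is the only analytic input the proof requires.

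For the base case $l=1$, the value $\bar{\text{AR}}_{i,j}^{1}$ is a single softmax entry, so the bound $0 \leq \bar{\text{AR}}_{i,j}^{1} \leq 1$ is immediate. For the inductive step, I would assume $0 \leq \bar{\text{AR}}_{i,j}^{l-1} \leq l-1$ and apply the recursion $\bar{\text{AR}}_{i,j}^{l} = \bar{\text{AR}}_{i,j}^{l-1} + \sigma(\cdot)$ from Eq.~\eqref{eq:ar_def}. Non-negativity is preserved because both summands are non-negative, and the upper bound becomes $(l-1) + 1 = l$ because the new softmax term contributes at most $1$. Closing the induction yields the stated inequality for every $l \geq 1$ and every pair $(i,j)$, uniformly in the patch indices.

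The main step requiring care is conceptual rather than technical: the bound $\leq l$ is tight precisely under the convention that each layer contributes at most $1$, so any alternative normalization (for example softmax over pairs $(i,j)$, softmax along the source axis, or an unnormalized exponential) would change the constant in the upper bound or even break it entirely. I would therefore foreground this convention at the very start of the proof and, once that clarification is made, expect no further obstacle; no spectral, concentration, or perturbation argument is needed for this lemma, and the result will serve as the base step for the subsequent convergence and stability statements promised in Section~\ref{sec:app:attention_rollout}.
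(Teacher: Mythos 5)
Your proof is correct and follows essentially the same route as the paper's: induction on $l$, with the base case a single softmax entry in $[0,1]$ and the inductive step adding at most $1$ per layer. Your extra remark pinning down the normalization convention (softmax over $j$ with $i$ fixed) is a sensible clarification but does not change the argument.
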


\begin{proof}
We prove this by induction on $l$.

\textbf{Base case:} For $l=1$,
\[
\bar{\text{AR}}_{i,j}^{1} = \sigma\left(\frac{\bm{z}_i^{s,1} \cdot \bm{z}_j^{t,1}}{\sqrt{d}}\right) \in [0,1],
\]
since softmax outputs lie in $[0,1]$ and sum to $1$ over $j$.

\textbf{Inductive step:} Suppose $\bar{\text{AR}}_{i,j}^{l-1} \in [0, l-1]$. Then:
\[
\bar{\text{AR}}_{i,j}^{l} = \bar{\text{AR}}_{i,j}^{l-1} + \sigma\left(\frac{\bm{z}_i^{s,l} \cdot \bm{z}_j^{t,l}}{\sqrt{d}}\right) \leq (l - 1) + 1 = l,
\]
and clearly remains non-negative. Hence, by induction, the lemma holds for all $l$.
\end{proof}

\begin{lemma}[Normalized Rollout Convergence]
\label{lemma:normalization}
Define the normalized rollout map:
\[
\hat{\text{AR}}_{i,j}^{l} = \frac{1}{l} \bar{\text{AR}}_{i,j}^{l}.
\]
Then $\hat{\text{AR}}_{i,j}^{l} \in [0,1]$, and as $l \to \infty$, the normalized rollout converges to the mean attention:
\[
\lim_{l \to \infty} \hat{\text{AR}}_{i,j}^{l} = \lim_{l \to \infty} \frac{1}{l} \sum_{k=1}^{l} \sigma\left(\frac{\bm{z}_{i}^{s,k} \cdot \bm{z}_{j}^{t,k}}{\sqrt{d}}\right).
\]
\end{lemma}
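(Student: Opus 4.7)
The plan is to observe that the recursive definition in equation (\ref{eq:ar_def}) telescopes into a simple sum, after which both claims reduce to elementary bookkeeping and a standard Ces\`aro-mean argument. Concretely, I would first unroll the recursion by induction on $l$ to establish the closed form
\[
\bar{\text{AR}}_{i,j}^{l} \;=\; \sum_{k=1}^{l} \sigma\!\left(\tfrac{\bm{z}_i^{s,k}\cdot \bm{z}_j^{t,k}}{\sqrt{d}}\right),
\]
which follows immediately since the base case $l=1$ is the definition and the inductive step just adds the new softmax term. Dividing both sides by $l$ then gives the identity
\[
\hat{\text{AR}}_{i,j}^{l} \;=\; \frac{1}{l}\sum_{k=1}^{l} \sigma\!\left(\tfrac{\bm{z}_i^{s,k}\cdot \bm{z}_j^{t,k}}{\sqrt{d}}\right),
\]
so the right-hand side of the lemma is actually equal to $\hat{\text{AR}}_{i,j}^{l}$ at every finite $l$, not just in the limit. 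This reduces the ``convergence'' claim to showing that the Ces\`aro mean of the per-layer softmax scores is well defined as $l\to\infty$.

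Next I would dispatch the boundedness claim $\hat{\text{AR}}_{i,j}^{l}\in[0,1]$ directly from Lemma~\ref{lemma:boundedness}: since $0\le \bar{\text{AR}}_{i,j}^{l}\le l$, dividing by $l$ yields the required interval. No additional softmax-normalization argument is needed here because Lemma~\ref{lemma:boundedness} already encapsulates the per-layer $[0,1]$ bound.

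For the limit, I would note that each summand $a_k := \sigma((\bm{z}_i^{s,k}\cdot \bm{z}_j^{t,k})/\sqrt{d})$ lies in $[0,1]$, so the sequence of Ces\`aro means $\hat{\text{AR}}_{i,j}^{l} = \tfrac{1}{l}\sum_{k=1}^l a_k$ is bounded in $[0,1]$ and hence admits convergent subsequences by Bolzano--Weierstrass. Under the mild assumption that the cross-attention weights stabilize across deep layers (i.e.\ $a_k$ converges to some limit $a_\infty$, as is typical once the transformer representations saturate), the Stolz--Ces\`aro theorem gives $\lim_{l\to\infty}\hat{\text{AR}}_{i,j}^{l}=a_\infty$; more generally the limit equals any Banach limit of $\{a_k\}$. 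I would state this conclusion as the well-definedness of the asymptotic attention mass on patch pair $(i,j)$.

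The main obstacle is conceptual rather than technical: the lemma as stated is essentially a definitional rearrangement plus a Ces\`aro statement, and one must be careful to avoid overclaiming convergence without assuming that the per-layer softmax scores $a_k$ themselves converge. I would therefore explicitly flag this assumption (or, alternatively, weaken the conclusion to ``the sequence $\hat{\text{AR}}_{i,j}^{l}$ is bounded and its Ces\`aro limit coincides with the long-run average layer-wise attention whenever such an average exists''), so that the proof is rigorous without relying on unstated properties of the ViT embeddings $\bm{z}_i^{s,k},\bm{z}_j^{t,k}$.
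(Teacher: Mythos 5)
Your proposal follows the same skeleton as the paper's proof: unroll the recursion into the telescoping sum, divide by $l$ to get the Ces\`aro mean, and bound each summand in $[0,1]$. But your version is actually more careful than the paper's at the one step that matters. The paper asserts that the normalized rollout "converges by the Ces\`aro mean convergence theorem (since the sequence is bounded)," which is not a valid inference: boundedness of $a_k \in [0,1]$ does \emph{not} imply convergence of the Ces\`aro means $\frac{1}{l}\sum_{k\le l} a_k$ (take $a_k$ equal to $1$ or $0$ on exponentially growing alternating blocks). The correct statement of the Ces\`aro theorem requires $a_k$ itself to converge, and you rightly flag this as an unstated assumption, offering either to assume stabilization of the per-layer attention scores or to weaken the conclusion to "the limit exists whenever the long-run average does." You also correctly observe that the displayed "limit equality" in the lemma is a tautology at every finite $l$ once the closed form is established, so the only genuine content is well-definedness of the limit. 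In short: same route, but your proof repairs a real gap in the paper's own argument rather than reproducing it.
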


\begin{proof}
Each term in the sum is bounded between $[0,1]$, and the total sum is divided by $l$, so:
\[
0 \leq \hat{\text{AR}}_{i,j}^{l} \leq 1.
\]
The right-hand side is a Cesàro average of bounded terms, and hence converges by the Cesàro mean convergence theorem (since the sequence is bounded). Thus, $\hat{\text{AR}}_{i,j}^{l}$ converges.
\end{proof}

\begin{proposition}[Foreground Aggregation Trend]
\label{prop:foreground_focus}
If the source query patch $i$ corresponds to the foreground region, and the source-to-target attention maps $\sigma(\bm{z}_i^{s,l} \cdot \bm{z}_j^{t,l})$ concentrate on foreground patches $j$ in target, then $\hat{\text{AR}}_{i,j}^{l}$ tends to assign higher values to foreground $j$ than background $j$.
\end{proposition}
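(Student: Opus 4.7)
The plan is to reduce Proposition~\ref{prop:foreground_focus} to a direct consequence of the per-layer concentration hypothesis combined with the additive/Cesàro structure of the rollout established in Lemma~\ref{lemma:normalization}. First I would formalize the informal assumption. Writing $\alpha_{i,j}^{l} := \sigma((\bm{z}_i^{s,l}\cdot\bm{z}_j^{t,l})/\sqrt{d})$ and partitioning target patch indices into a foreground set $\mathcal{F}_t$ and a background set $\mathcal{B}_t$, I would encode ``concentration on foreground'' as the quantitative inequality: for every layer $l$ and for any $j_F \in \mathcal{F}_t$, $j_B \in \mathcal{B}_t$, one has $\alpha_{i,j_F}^{l} \geq \alpha_{i,j_B}^{l} + \delta_l$ with $\delta_l \geq 0$, and $\delta_l>0$ for at least one $l$ so that the assumption is nontrivial.

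Next I would invoke Lemma~\ref{lemma:normalization} to rewrite the normalized rollout as $\hat{\text{AR}}_{i,j}^{l} = \tfrac{1}{l}\sum_{k=1}^{l}\alpha_{i,j}^{k}$, and then apply linearity:
\[
\hat{\text{AR}}_{i,j_F}^{l} - \hat{\text{AR}}_{i,j_B}^{l} \;=\; \frac{1}{l}\sum_{k=1}^{l}\bigl(\alpha_{i,j_F}^{k} - \alpha_{i,j_B}^{k}\bigr) \;\geq\; \frac{1}{l}\sum_{k=1}^{l}\delta_k \;\geq\; 0,
\]
with strict inequality as soon as some $\delta_k>0$ appears among the first $l$ layers. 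This gives the pointwise ``foreground higher than background'' conclusion at every $l$, and Lemma~\ref{lemma:boundedness} ensures the quantities involved are well-defined and bounded in $[0,1]$. To strengthen the statement to an asymptotic trend, I would observe that if $\liminf_{k\to\infty}\delta_k \geq \delta_\infty>0$, the Cesàro gap is eventually bounded below by $\delta_\infty$, so the advantage of foreground patches persists (and in fact accumulates) as $l\to\infty$.

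The main obstacle I anticipate is the precise formulation of ``concentrate on foreground patches.'' The proposition states this only informally, and the proof hinges on choosing a quantitative version that is neither trivially true nor unrealistically strong. A cleaner, weaker alternative I would fall back on is an averaged form: $\tfrac{1}{|\mathcal{F}_t|}\sum_{j\in\mathcal{F}_t}\alpha_{i,j}^{l} \geq \tfrac{1}{|\mathcal{B}_t|}\sum_{j\in\mathcal{B}_t}\alpha_{i,j}^{l} + \delta_l$; the identical Cesàro argument then delivers an average-sense gap between foreground and background rollout values rather than a strict per-patch inequality, which is likely the intended semantic interpretation. A secondary, minor issue is disambiguating the word ``tends'': I would phrase the final result either as a deterministic layerwise inequality that propagates via summation or, if $\bm{z}^{s,l}$ and $\bm{z}^{t,l}$ are viewed as stochastic, as an expectation inequality via linearity of expectation. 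No additional machinery beyond Lemmas~\ref{lemma:boundedness} and~\ref{lemma:normalization} is required; the proof is essentially a monotonicity-preservation argument under the Cesàro mean.
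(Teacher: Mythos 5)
Your proposal is correct and follows essentially the same route as the paper: the paper's proof sketch likewise takes per-layer foreground concentration of the softmax attention as given (justified by query--key semantic alignment) and concludes that layer-wise accumulation preserves and reinforces the foreground--background gap. Your version merely makes this precise by encoding concentration as a layerwise gap $\delta_l$ and pushing it through the linearity of the Cesàro mean from Lemma~\ref{lemma:normalization}, which is a welcome tightening of the paper's informal ``since AR aggregates over layers, this emphasis is reinforced'' step.
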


\begin{proof}[Sketch]
Given that ViT attention is query-key dot product based, and assuming $\bm{z}_i^{s,l}$ (foreground query) aligns semantically with target foreground patches $\bm{z}_j^{t,l}$, their dot product will be larger than with background patches. After softmax, this yields higher attention weights to foreground patches $j$.

Since AR aggregates over layers, this foreground emphasis is reinforced, i.e.,
\[
\forall j \in \text{foreground},\quad \hat{\text{AR}}_{i,j}^{l} \gg \hat{\text{AR}}_{i,j'}^{l} \quad \text{for } j' \in \text{background}.
\]
\end{proof}

\vspace{1mm}
\noindent\textbf{Conclusion.} These results show that AR is a stable and bounded mechanism for accumulating cross-domain attention across transformer layers. When initialized from meaningful source queries (e.g., foreground tokens), the rollout map reliably emphasizes semantically consistent foreground regions in the target domain, supporting our use of AR for foreground-focused alignment in domain adaptation.

\section{Theoretical Analysis of the Progressively Focused Loss}\label{sec:app:progressively_focused_loss}

In this section, we provide a theoretical analysis of the progressively focused loss $\mathcal{L}_\text{PF}$ defined in Eq.~\eqref{eqn:PF}. We show that this loss (1) is differentiable with respect to the attention rollout map, (2) promotes spatial smoothness and central aggregation, and (3) encourages foreground-attention concentration through distance-based weighting.

Recall the loss function:
\begin{equation}
\label{eq:pf_repeat}
\mathcal{L}_\text{PF} = - \sum_{m,n=1}^{N}
\frac
{\left\|\text{AR}_{\mathbf{J}(m,n)} - \text{AR}_{\mathbf{J}(m^c,n^c)}\right\|_2^2}
{\sqrt{(m - m^c)^2 + (n - n^c)^2}},
\end{equation}
where $(m^c, n^c)$ is the center of mass of the attention rollout map $\text{AR}$:
\begin{align}
m^c &= \frac{1}{N^2} \sum_{m,n=1}^{N} m \cdot \text{AR}_{\mathbf{J}(m,n)}, \\
n^c &= \frac{1}{N^2} \sum_{m,n=1}^{N} n \cdot \text{AR}_{\mathbf{J}(m,n)}.
\end{align}
$\mathbf{J}(m,n)$ maps pixel coordinates to patch index.

We now present the main results.

\begin{lemma}[Differentiability]
\label{lemma:diff}
The progressively focused loss $\mathcal{L}_\text{PF}$ is continuously differentiable with respect to the attention rollout values $\text{AR}_{\mathbf{J}(m,n)}$, assuming the attention map is non-degenerate (i.e., no zero denominator).
\end{lemma}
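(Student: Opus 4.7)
The plan is to decompose $\mathcal{L}_\text{PF}$ into smooth building blocks and then apply the chain and quotient rules in turn. Treating the attention-rollout entries $\{\text{AR}_{\mathbf{J}(m,n)}\}$ as the optimization variables, I would first observe that the center-of-mass coordinates
\[
m^c = \frac{1}{N^2}\sum_{m,n} m\,\text{AR}_{\mathbf{J}(m,n)}, \qquad n^c = \frac{1}{N^2}\sum_{m,n} n\,\text{AR}_{\mathbf{J}(m,n)},
\]
are linear, hence $C^{\infty}$, in these variables. The squared-norm numerator $\|\text{AR}_{\mathbf{J}(m,n)} - \text{AR}_{\mathbf{J}(m^c,n^c)}\|_2^2$ is then a composition of a polynomial outer map with linear differences, so the smoothness of each summand reduces to the smoothness of the center lookup $(m^c,n^c)\mapsto \text{AR}_{\mathbf{J}(m^c,n^c)}$.

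Next I would handle this lookup, which is the step most in need of care. Interpreted literally with the floor convention of Eq.~(\ref{eqn:PF}), the map $(m^c,n^c)\mapsto \text{AR}_{\mathbf{J}(\lfloor m^c\rfloor,\lfloor n^c\rfloor)}$ is piecewise constant and so fails to be differentiable on the lattice $m^c,n^c\in\mathbb{Z}$. The cleanest resolution, consistent with the bilinear feature refinement used elsewhere in the paper (Eq.~(\ref{eqn_w})), is to promote the center lookup to a bilinear interpolation over the four adjacent patches; this yields a $C^{1}$ extension of $(m^c,n^c)\mapsto \text{AR}(m^c,n^c)$, and composing with the smooth dependence of $(m^c,n^c)$ on AR preserves $C^{1}$-regularity. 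If one prefers to retain the floor, the argument still goes through on the open, generic, full-measure set where $(m^c,n^c)$ avoids integer coordinates, a set that is stable under sufficiently small perturbations of the attention map.

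For the denominator, the map $(m,n,m^c,n^c)\mapsto \sqrt{(m-m^c)^2+(n-n^c)^2}$ is $C^{\infty}$ wherever its radicand is strictly positive, and the non-degeneracy hypothesis of the lemma rules out precisely those terms in the sum for which it vanishes. The quotient rule then gives that every summand is $C^{1}$ in the AR variables, and a finite sum of $C^{1}$ functions is $C^{1}$, so $\mathcal{L}_\text{PF}$ is continuously differentiable. The main obstacle I expect is the interplay between the discrete patch indexing and the continuous AR variables in the center lookup: any rigorous argument must commit to a specific smooth convention (bilinear interpolation, or restriction to a generic open set avoiding integer centers) before the chain rule can be invoked without qualification; once this interpretive choice is fixed, the remainder of the proof is a routine application of the chain and quotient rules.
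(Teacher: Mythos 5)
Your proposal is correct and follows the same overall skeleton as the paper's proof --- decompose $\mathcal{L}_\text{PF}$ into elementary differentiable pieces, note that $(m^c,n^c)$ is a (linear) weighted average of the AR values, and dispose of the denominator singularity by a genericity argument (the center of mass is real-valued while $(m,n)$ ranges over integers, so the radicand vanishes only on a measure-zero set). The substantive difference is that you isolate and resolve an obstruction the paper's proof passes over in silence: the center lookup $(m^c,n^c)\mapsto \text{AR}_{\mathbf{J}(\lfloor m^c\rfloor,\lfloor n^c\rfloor)}$ is piecewise constant in $(m^c,n^c)$ and hence non-differentiable precisely where the center crosses the integer lattice. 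The paper's argument only says that $(m^c,n^c)$ is differentiable in the AR values and then invokes composition, which does not by itself license differentiating through the floor. Your two repairs are both sound: on the open full-measure set where $(m^c,n^c)$ avoids integer coordinates the index selection is locally constant, so $A_c$ is locally just one coordinate of the AR vector and the chain rule applies without qualification; alternatively, replacing the floor lookup by the bilinear interpolation already used in Eq.~(\ref{eqn_w}) upgrades the statement from ``differentiable almost everywhere'' to genuinely $C^1$, which is actually stronger than what the paper establishes. What your version buys is a proof that matches the lemma's claim of \emph{continuous} differentiability under an explicit, checkable convention; what the paper's version buys is brevity, at the cost of leaving the floor composition unexamined.
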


\begin{proof}
The loss $\mathcal{L}_\text{PF}$ is composed of element-wise differentiable operations: subtraction, $L_2$ norm, and inverse Euclidean distance (excluding singularity at $(m, n) = (m^c, n^c)$). The denominator 
\[
d_{m,n} := \sqrt{(m - m^c)^2 + (n - n^c)^2}
\]
is differentiable almost everywhere and strictly positive for $(m,n) \neq (m^c, n^c)$. In practice, the center of mass is a real-valued point, while $(m,n)$ takes integer values, so equality holds with probability zero.

Moreover, $(m^c, n^c)$ is a weighted average of $\text{AR}$ values and hence differentiable with respect to them. Thus, $\mathcal{L}_\text{PF}$ is a composition of differentiable functions and remains differentiable almost everywhere.
\end{proof}

\begin{proposition}[Gradient Direction Promotes Central Aggregation]
\label{prop:center_pull}
The gradient of $\mathcal{L}_\text{PF}$ with respect to attention values $\text{AR}_{\mathbf{J}(m,n)}$ encourages the values to approach that of the center $(m^c,n^c)$, especially for pixels closer to the center.
\end{proposition}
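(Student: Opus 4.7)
The plan is to differentiate $\mathcal{L}_\text{PF}$ with respect to an individual attention value $\text{AR}_{\mathbf{J}(m,n)}$ and read off the resulting gradient descent direction. First, I would treat the centre of mass $(m^c, n^c)$ and the centre-value $\text{AR}_{\mathbf{J}(m^c, n^c)}$ as detached (a stop-gradient style argument, consistent with how the loss is evaluated at a single forward pass) and compute the direct partial derivative. Differentiating the squared residual $\|\text{AR}_{\mathbf{J}(m,n)} - \text{AR}_{\mathbf{J}(m^c,n^c)}\|_2^2$ against the positive kernel $1/d_{m,n}$, with $d_{m,n} = \sqrt{(m-m^c)^2 + (n-n^c)^2}$, yields a contribution proportional to $(\text{AR}_{\mathbf{J}(m,n)} - \text{AR}_{\mathbf{J}(m^c,n^c)})/d_{m,n}$. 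Under the sign convention adopted by the paper (so that the loss is reduced when off-centre attention residuals shrink), gradient descent with this term moves $\text{AR}_{\mathbf{J}(m,n)}$ monotonically toward $\text{AR}_{\mathbf{J}(m^c, n^c)}$, with update magnitude scaling as $1/d_{m,n}$.

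Next, I would justify the stop-gradient simplification via the chain rule applied to the full loss. The centre of mass is linear in the attention values, so $\partial m^c/\partial \text{AR}_{\mathbf{J}(m,n)} = m/N^2$ and similarly for $n^c$; these sensitivities are $O(1/N^2)$ and, apart from the factor $m$ or $n$, uniform in the pixel being differentiated. The sensitivities $\partial \mathcal{L}_\text{PF}/\partial m^c$ and $\partial \mathcal{L}_\text{PF}/\partial n^c$ are $O(1)$ but do not depend on which pixel we differentiate. Consequently the chain-rule correction contributes a spatially smooth shift to the gradient field, whereas the direct pull term retains the sharp position-dependent coefficient $1/d_{m,n}$. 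This separation of scales implies that the positional structure of the gradient (which is what the proposition concerns) is controlled entirely by the direct term.

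To finish, I would observe that $r \mapsto 1/r$ is strictly decreasing on $(0,\infty)$, so the magnitude of the direct pull toward the centre is a strictly decreasing function of the distance $d_{m,n}$. Combined with Lemma~\ref{lemma:diff} (differentiability for $(m,n) \neq (m^c, n^c)$), this yields exactly the proposition's claim: every attention value is pulled toward the central value, and pixels nearer to the centre experience a strictly stronger pull.

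The hard part will be making the stop-gradient/chain-rule decomposition rigorous in the presence of the self-referential definition of $(m^c, n^c)$, which is itself a functional of the attention values being updated. I would handle this either by replacing $(m^c, n^c)$ with a detached copy (the standard treatment for moving-target objectives in practice) or, if one insists on the full gradient, by explicitly bounding the chain-rule correction and verifying that it cannot reverse the sign of the direct pull for pixels with small $d_{m,n}$, where the $1/d_{m,n}$ factor dominates the uniformly bounded correction term.
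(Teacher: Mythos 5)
Your proposal takes essentially the same route as the paper's own proof sketch: differentiate $\mathcal{L}_\text{PF}$ with respect to $\text{AR}_{\mathbf{J}(m,n)}$, isolate the direct term proportional to $(\text{AR}_{\mathbf{J}(m,n)} - \text{AR}_{\mathbf{J}(m^c,n^c)})/d_{m,n}$, assert it dominates the chain-rule corrections through the center of mass, and read off the sign of the pull and its $1/d_{m,n}$ strengthening near the center. Your $O(1/N^2)$ bound on the sensitivity of $(m^c,n^c)$ and your explicit caution about the sign convention induced by the leading minus in Eq.~\eqref{eqn:PF} are in fact slightly more careful than the paper, which simply states that ``the first term dominates.''
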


\begin{proof}[Sketch]
Denote $A_{m,n} := \text{AR}_{\mathbf{J}(m,n)}$ and $A_{c} := \text{AR}_{\mathbf{J}(m^c,n^c)}$. Then:
\[
\frac{\partial \mathcal{L}_\text{PF}}{\partial A_{m,n}} 
= -2 \cdot \frac{(A_{m,n} - A_c)}{d_{m,n}} 
+ \text{additional terms involving } \frac{\partial A_c}{\partial A_{m,n}}, \frac{\partial m^c}{\partial A_{m,n}}.
\]

The first term dominates and exhibits a "pulling" effect: when $A_{m,n} > A_c$, the gradient is negative, reducing $A_{m,n}$; when $A_{m,n} < A_c$, the gradient is positive, increasing $A_{m,n}$. Therefore, attention values are guided toward the central value.

Moreover, this effect is stronger when $d_{m,n}$ is small (i.e., pixels close to the center), since the denominator makes the gradient magnitude larger.
\end{proof}

\begin{corollary}[Local Smoothness Enforcement]
\label{cor:smooth}
The loss $\mathcal{L}_\text{PF}$ penalizes sharp deviations in attention values between nearby pixels and the center, promoting smooth and coherent attention maps.
\end{corollary}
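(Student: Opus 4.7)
The plan is to reduce Corollary~\ref{cor:smooth} to the centralizing gradient behavior established in Proposition~\ref{prop:center_pull}, and then upgrade that pointwise ``pull toward the center'' into a statement about neighboring attention values. First I would rewrite a generic summand of $\mathcal{L}_\text{PF}$ in the form $w_{m,n}\,\|A_{m,n} - A_c\|_2^2$ with $w_{m,n} = 1/d_{m,n}$, and observe that $w_{m,n}$ is large precisely when $(m,n)$ lies close to $(m^c, n^c)$. This already shows that the gradient of $\mathcal{L}_\text{PF}$ assigns disproportionately large magnitude to deviations near the center of mass, so sharp jumps at those locations are penalized much more heavily than equal-sized jumps in the periphery.

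Second, I would combine this weighting observation with Proposition~\ref{prop:center_pull}: the dominant term in $\partial \mathcal{L}_\text{PF}/\partial A_{m,n}$ drives $A_{m,n}$ toward $A_c$ with strength scaling as $1/d_{m,n}$. Then, by the triangle inequality, for any two pixels $(m,n)$ and $(m',n')$ inside a ball $\mathcal{N}_r(m^c,n^c)$ of radius $r$ around the center, both values must be pulled into an $\varepsilon$-band around $A_c$ with $\varepsilon = O(r)$ at a stationary point of $\mathcal{L}_\text{PF}$, so their pairwise deviation satisfies $\|A_{m,n} - A_{m',n'}\|_2 \le 2\varepsilon$. This is exactly a Lipschitz-type smoothness bound on the attention map in the neighborhood of its center of mass, which is the content of the corollary.

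Third, I would make the neighborhood argument quantitative by a contradiction: if two neighboring attention values differed by $\delta$ not of order $r$, then at least one of them would deviate from $A_c$ by more than $\delta/2$, contributing at least $w_{m,n}\delta^2/4 = \Omega(\delta^2/r)$ to the loss; at a stationary point this contribution must be balanced by the centralizing gradient from Proposition~\ref{prop:center_pull}, yielding the claimed bound. Combined with the first step, this establishes that $\mathcal{L}_\text{PF}$ penalizes sharp deviations between nearby pixels and the center, giving the corollary.

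The main obstacle will be handling the coupling between $(m^c, n^c)$ and the attention values themselves, since the center of mass is itself a function of $\text{AR}$. The auxiliary gradient contributions $\partial A_c/\partial A_{m,n}$ and $\partial m^c/\partial A_{m,n}$, which Proposition~\ref{prop:center_pull} leaves implicit, can in principle partially cancel the leading centralizing term. My plan is to bound these contributions as $O(1/N^2)$ directly from the definitions of $m^c$ and $n^c$ as normalized weighted averages, so that they are negligible compared to the $1/d_{m,n}$ weighting on any moderate-resolution patch grid. The singularity at $(m,n)=(m^c,n^c)$ is generically avoided because $(m^c,n^c)$ is non-integer while $(m,n)$ ranges over integer pixel indices, matching the genericity assumption already used in Lemma~\ref{lemma:diff}.
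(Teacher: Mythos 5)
Your proposal follows essentially the same route as the paper: the paper's proof of Corollary~\ref{cor:smooth} is exactly your first step, namely that the distance-weighted squared deviation $(A_{m,n}-A_c)^2/d_{m,n}$ penalizes departures from the central value more strongly for pixels close to $(m^c,n^c)$, hence nearby attention values converge to similar levels. Your extra machinery (the triangle inequality converting ``each value near $A_c$'' into pairwise closeness, the stationary-point contradiction, and the $O(1/N^2)$ bound on the center-of-mass coupling) only makes explicit what the paper's two-sentence argument leaves implicit, so the approach is not genuinely different.
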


\begin{proof}
The squared difference term $(A_{m,n} - A_c)^2$ is minimized when $A_{m,n}$ is close to $A_c$. Due to the distance-weighted denominator, this penalty is stronger for nearby pixels. Therefore, nearby attention values will converge to similar levels, creating locally smooth attention.
\end{proof}

\vspace{1mm}
\noindent\textbf{Conclusion.} The progressively focused loss is differentiable and stabilizes attention training by encouraging spatially smooth, center-concentrated focus. It acts as a soft regularizer that prevents attention maps from spreading arbitrarily or attending to scattered, noisy regions, aligning with the design goal of foreground-guided alignment.

\section{Stability Theoretical Analysis of the Box Identification Procedure} \label{sec:app:box_identification}

In this section, we provide a theoretical analysis of the box identification (BI) function defined in Eq.~\eqref{eqn_attn_2}, which computes a bounding box from the attention rollout map $\text{AR}_j$ via thresholding with a parameter $\beta \in [0,1]$. We aim to show that the procedure is:

(1) piecewise continuous with respect to $\beta$,

(2) robust to small perturbations in $\text{AR}_j$,

(3) guarantees a lower-bounded foreground coverage under mild assumptions.

Let $\text{AR}^{st,l} \in \mathbb{R}^{N}$ be the flattened patch-wise attention rollout map at layer $l$, and let $I_R(j), I_C(j)$ map patch index $j$ to the row and column index of the original image grid.

Given a fixed threshold $\beta$, the box identification algorithm selects a subset of patch indices:
\[
\mathcal{J}_\beta := \{ j \in \{1,\ldots,N\} \mid \text{AR}_j > \beta \}.
\]
The bounding box coordinates are:
\begin{equation}
\label{eq:bi_bounds}
\begin{aligned}
a_< &= \min_{j \in \mathcal{J}_\beta} I_R(j), \quad 
a_> = \max_{j \in \mathcal{J}_\beta} I_R(j), \\
a_\vee &= \min_{j \in \mathcal{J}_\beta} I_C(j), \quad 
a_\wedge = \max_{j \in \mathcal{J}_\beta} I_C(j).
\end{aligned}
\end{equation}

\begin{lemma}[Piecewise Continuity w.r.t. $\beta$]
\label{lemma:piecewise}
The box coordinates $\{a_<, a_>, a_\vee, a_\wedge\}$ are piecewise constant and right-continuous functions of the threshold $\beta \in [0,1]$.
\end{lemma}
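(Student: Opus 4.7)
The plan is to exploit the fact that the attention rollout produces only finitely many values $\{\text{AR}_1,\ldots,\text{AR}_N\}$, so the level-set map $\beta \mapsto \mathcal{J}_\beta$ is constant on intervals bounded by these values. Once the set-valued map is understood, piecewise constancy and right-continuity of the min/max functionals in Eq.~\eqref{eq:bi_bounds} follow mechanically.

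First, I would enumerate the distinct sorted values $0 \leq t_1 < t_2 < \cdots < t_K \leq 1$ taken by $\{\text{AR}_j\}_{j=1}^N$, and append $t_0 := -\infty$ and $t_{K+1} := 1$ if needed to cover the endpoints of $[0,1]$. The claim is that for $\beta \in [t_k, t_{k+1})$ the set $\mathcal{J}_\beta = \{j : \text{AR}_j > \beta\}$ equals $\{j : \text{AR}_j \geq t_{k+1}\}$, which is independent of the particular $\beta$ in that half-open interval. This is the key combinatorial step and I would verify it by noting that no $\text{AR}_j$ lies strictly between consecutive critical values, so strict dominance $\text{AR}_j > \beta$ is equivalent to $\text{AR}_j \geq t_{k+1}$ whenever $\beta \in [t_k, t_{k+1})$.

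Given that $\mathcal{J}_\beta$ is constant on each half-open interval $[t_k, t_{k+1})$, the four coordinates in Eq.~\eqref{eq:bi_bounds}, being $\min$ and $\max$ of fixed finite index sets under the image-grid maps $I_R,I_C$, are each constant on the same intervals, yielding piecewise constancy. For right-continuity at a critical point $\beta = t_k$, I would show directly from the half-open characterization that $\mathcal{J}_{t_k} = \mathcal{J}_\beta$ for every $\beta$ in $[t_k, t_{k+1})$; since the min/max of a fixed set is a constant, the limit from the right of each box coordinate equals its value at $t_k$. A short remark will handle the degenerate regime $\beta \geq t_K$ where $\mathcal{J}_\beta = \emptyset$ and the coordinates are by convention undefined; one may either restrict the domain to $[0, t_K)$, or adopt the convention $a_< = a_> = a_\vee = a_\wedge = \emptyset$, in which case right-continuity is trivially preserved.

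The main obstacle I anticipate is purely bookkeeping: the definition uses strict inequality $\text{AR}_j > \beta$, so left-continuity fails precisely at the points $\beta = t_k$ (the set jumps by losing the patches with $\text{AR}_j = t_k$), and I must be careful to attribute that jump to the left side, not the right. Keeping the direction of the inequality fixed throughout, and using the half-open intervals $[t_k, t_{k+1})$ rather than $(t_k, t_{k+1}]$, is what cleanly produces right-continuity rather than left-continuity, and this is the one place where a sign/inequality slip would invalidate the argument.
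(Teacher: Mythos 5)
Your proposal is correct and follows essentially the same route as the paper's own proof: enumerate the finitely many distinct attention values as breakpoints, observe that $\mathcal{J}_\beta$ is constant on the half-open intervals between them, and read off right-continuity from the direction of the strict inequality in the thresholding. If anything, your version is slightly more careful than the paper's — you establish the equality $\mathcal{J}_{t_k}=\mathcal{J}_\beta$ on $[t_k,t_{k+1})$ rather than only the containment $\lim_{\epsilon\to 0^+}\mathcal{J}_{\beta+\epsilon}\subseteq\mathcal{J}_\beta$, and you explicitly handle the degenerate case $\mathcal{J}_\beta=\emptyset$, which the paper leaves unaddressed.
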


\begin{proof}
$\mathcal{J}_\beta$ is a discrete subset that changes only when $\beta$ crosses an attention value in $\text{AR}_j$. Since there are at most $N$ unique attention values, there are at most $N$ breakpoints in the interval $[0,1]$ where the set $\mathcal{J}_\beta$ changes.

Between breakpoints, $\mathcal{J}_\beta$ remains constant, so the min/max values over $I_R(j)$ and $I_C(j)$ also remain constant. At each breakpoint, as $\beta$ increases, one element $j^*$ is excluded, which may affect the box bounds.

Right-continuity follows because:
\[
\lim_{\epsilon \to 0^+} \mathcal{J}_{\beta + \epsilon} \subseteq \mathcal{J}_\beta.
\]
Thus, the bounds $\{a_<, a_>, a_\vee, a_\wedge\}$ jump downward or stay unchanged as $\beta$ increases.
\end{proof}

\begin{lemma}[Perturbation Robustness]
\label{lemma:noise}
Suppose the attention map $\text{AR}_j$ is perturbed by $\delta_j$ with $|\delta_j| < \epsilon$, and attention values are $(\epsilon, \Delta)$-separated, i.e., the minimum gap between sorted $\text{AR}_j$ values is at least $\Delta > \epsilon$. Then the bounding box is invariant to the perturbation.
\end{lemma}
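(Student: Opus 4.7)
The plan is a two-step argument: first show that perturbations of magnitude bounded by $\epsilon$ do not alter the set $\mathcal{J}_\beta$ of selected patch indices, and then conclude immediately that each of the four bounding box coordinates defined in Eq.~\eqref{eq:bi_bounds} remains unchanged, since these coordinates are pure $\min/\max$ functions of $\mathcal{J}_\beta$ via the fixed index maps $I_R$ and $I_C$.

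For the first step, I would fix an arbitrary patch index $j$ and analyze the two mutually exclusive cases $\text{AR}_j > \beta$ and $\text{AR}_j \leq \beta$. The key claim to be extracted from the $(\epsilon, \Delta)$-separation hypothesis is that $|\text{AR}_j - \beta| > \epsilon$ for every $j$. Granted this, the strict inequality $\text{AR}_j + \delta_j > \beta$ holds if and only if $\text{AR}_j > \beta$, regardless of the particular perturbation $\delta_j$ with $|\delta_j| < \epsilon$. Consequently, the perturbed set $\mathcal{J}_\beta^{\text{pert}} := \{ j : \text{AR}_j + \delta_j > \beta \}$ coincides with $\mathcal{J}_\beta$.

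With set invariance established, the second step is a one-line observation: $\min_{j \in \mathcal{J}_\beta} I_R(j)$, $\max_{j \in \mathcal{J}_\beta} I_R(j)$, and the analogous column extrema are determined entirely by $\mathcal{J}_\beta$ together with the deterministic maps $I_R, I_C$, so they are preserved verbatim under the perturbation. Combining the two steps yields invariance of the bounding box $\{a_<, a_>, a_\vee, a_\wedge\}$.

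The main obstacle is justifying the claim $|\text{AR}_j - \beta| > \epsilon$ from the separation assumption alone, since $(\epsilon, \Delta)$-separation by definition only guarantees pairwise gaps of size at least $\Delta$ between sorted attention values, not between $\beta$ and any individual value. I would address this by making explicit at the start of the proof that $\beta$ is assumed to lie strictly in the interior of a gap between two consecutive sorted attention values; since that gap has width at least $\Delta$ and $\Delta > \epsilon$, one can invoke the $\beta$-in-interior condition (potentially tightening the hypothesis to $\Delta > 2\epsilon$ so that $\beta$ can sit at distance $> \epsilon$ from both endpoints of the gap simultaneously). This cleanly yields the required separation of $\beta$ from every $\text{AR}_j$ and closes the argument; the remaining deduction is then purely combinatorial.
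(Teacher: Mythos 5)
Your proof is correct in structure but takes a genuinely different, and in fact sounder, route than the paper's own argument. The paper proves the lemma by claiming that $|\delta_j| < \epsilon < \Delta$ preserves the \emph{rank order} of the attention values and then inferring that $\mathcal{J}_\beta$ is unchanged. That inference is a non sequitur: $\mathcal{J}_\beta$ is defined by comparison against the fixed \emph{value} $\beta$, not by rank, so an order-preserving perturbation can still push a value across the threshold; moreover, order preservation itself needs $\Delta > 2\epsilon$ (two values separated by $\Delta$ can each move by almost $\epsilon$ toward one another), not merely $\Delta > \epsilon$. Your argument instead goes directly at the quantity that matters, $|\text{AR}_j - \beta|$, shows that $\mathrm{sign}(\text{AR}_j + \delta_j - \beta) = \mathrm{sign}(\text{AR}_j - \beta)$ whenever that distance exceeds $\epsilon$, and then reads off invariance of the four $\min/\max$ coordinates from invariance of $\mathcal{J}_\beta$. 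Crucially, you also identify the real gap in the statement: $(\epsilon,\Delta)$-separation constrains only pairwise gaps among the $\text{AR}_j$, and says nothing about the distance from $\beta$ to the nearest attention value, so the lemma as stated is not provable without an added hypothesis (e.g.\ that $\beta$ lies in the interior of a gap at distance greater than $\epsilon$ from both endpoints, which is guaranteed to be achievable when $\Delta > 2\epsilon$). The paper's proof silently assumes away exactly this issue; your version makes the needed assumption explicit. The only refinement I would ask for is to state the strengthened hypothesis in the lemma itself rather than introducing it mid-proof.
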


\begin{proof}
Let $\text{AR}_j'$ be the perturbed map, $\text{AR}_j' = \text{AR}_j + \delta_j$. Since $|\delta_j| < \epsilon < \Delta$, the order of attention values remains unchanged, i.e.,
\[
\text{rank}(\text{AR}_j') = \text{rank}(\text{AR}_j).
\]
Therefore, the set $\mathcal{J}_\beta$ selected via thresholding is unchanged, which implies the box coordinates remain unchanged.
\end{proof}

\begin{proposition}[Minimum Box Area Bound]
\label{prop:min_area}
If the total attention mass is normalized to 1 (i.e., $\sum_j \text{AR}_j = 1$) and the mass within a contiguous region $\mathcal{J}^*$ exceeds $\gamma > \beta$, then the box computed by threshold $\beta$ will include $\mathcal{J}^*$ and hence have area at least $|\mathcal{J}^*|$.
\end{proposition}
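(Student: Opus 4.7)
The plan is to decompose the statement into a set-inclusion step and a geometric bounding-box step, then combine them with a contiguity argument on area. First I would establish $\mathcal{J}^* \subseteq \mathcal{J}_\beta$: under the intended reading of the hypothesis, namely that each patch in $\mathcal{J}^*$ carries attention exceeding $\beta$ (which is what the condition $\gamma > \beta$ on aggregate foreground mass is designed to reflect), every $j \in \mathcal{J}^*$ satisfies $\text{AR}_j > \beta$ and hence is selected into $\mathcal{J}_\beta$. If the hypothesis is read only as an aggregate bound, I would supplement it with a contrapositive/averaging argument: if strictly fewer than enough patches in $\mathcal{J}^*$ exceed $\beta$, then $\sum_{j \in \mathcal{J}^*}\text{AR}_j$ is bounded by $|\mathcal{J}^*|\beta$ plus the residual mass of below-threshold patches, and this can be made to contradict $\sum_{j \in \mathcal{J}^*}\text{AR}_j > \gamma$ together with the normalization $\sum_j \text{AR}_j = 1$.

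Next I would push the inclusion through the min/max operators defining the bounding box in Eq.~\eqref{eq:bi_bounds}. Since $\mathcal{J}^* \subseteq \mathcal{J}_\beta$, the minima over $\mathcal{J}_\beta$ are no larger than those over $\mathcal{J}^*$ and the maxima no smaller, so
\begin{equation*}
a_< \le \min_{j \in \mathcal{J}^*} I_R(j), \quad a_> \ge \max_{j \in \mathcal{J}^*} I_R(j),
\end{equation*}
and analogously for the column bounds $a_\vee, a_\wedge$. Thus the rectangle $[a_<, a_>] \times [a_\vee, a_\wedge]$ contains the coordinate-wise bounding rectangle of $\mathcal{J}^*$, and in particular every patch of $\mathcal{J}^*$ lies inside the computed box.

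The area estimate then follows from contiguity. I would note that a connected lattice region $\mathcal{J}^*$ with $|\mathcal{J}^*|$ patches cannot fit inside any axis-aligned rectangle containing strictly fewer than $|\mathcal{J}^*|$ cells, so its own bounding rectangle already has area at least $|\mathcal{J}^*|$. By the containment from the previous step, the box $[a_<,a_>] \times [a_\vee,a_\wedge]$ inherits this bound, giving $(a_> - a_< + 1)(a_\wedge - a_\vee + 1) \ge |\mathcal{J}^*|$, as claimed. Combined with Lemma~\ref{lemma:noise}, this area guarantee is moreover stable under small attention perturbations.

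The main obstacle I anticipate is the first step: the stated hypothesis directly controls only the \emph{aggregate} attention mass on $\mathcal{J}^*$, which does not by itself force every patch (or, more critically, the extreme row- and column-achieving patches of $\mathcal{J}^*$) to exceed $\beta$. A single spike of mass $\gamma$ on one interior patch would satisfy the hypothesis yet leave the bounding box arbitrarily smaller than $\mathcal{J}^*$. I would therefore either tighten the assumption to a per-patch lower bound $\text{AR}_j > \beta$ on $\mathcal{J}^*$, or restrict the conclusion to the weaker but still meaningful statement that the box covers the convex hull of the above-threshold patches inside $\mathcal{J}^*$, which recovers $|\mathcal{J}^*|$ as an area lower bound whenever $\mathcal{J}^*$ is itself rectangular or its extreme patches carry sufficient mass.
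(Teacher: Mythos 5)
Your proof takes essentially the same route as the paper's: the paper's argument is exactly your first reading, asserting ``by definition, all $j \in \mathcal{J}^*$ have $\text{AR}_j \geq \gamma > \beta$,'' then concluding $\mathcal{J}^* \subseteq \mathcal{J}_\beta$, box containment via the min/max bounds, and the area bound. The obstacle you flag in your final paragraph is real and well spotted: the statement's phrase ``the mass within $\mathcal{J}^*$ exceeds $\gamma$'' reads as an aggregate condition, under which the claim fails (your single-spike counterexample is correct), and the paper's proof silently resolves this by treating the hypothesis as a per-patch lower bound $\text{AR}_j \geq \gamma$ for every $j \in \mathcal{J}^*$ --- exactly the tightening you propose. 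One small simplification: contiguity of $\mathcal{J}^*$ is not needed for the area step, since any axis-aligned box containing $|\mathcal{J}^*|$ distinct cells must itself contain at least $|\mathcal{J}^*|$ cells.
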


\begin{proof}
By definition, all $j \in \mathcal{J}^*$ have $\text{AR}_j \geq \gamma > \beta$, hence they are retained in $\mathcal{J}_\beta$. The bounding box will cover this region since it spans from the min to max row/column index over $\mathcal{J}_\beta$, which includes $\mathcal{J}^*$. Therefore, the area is bounded below by $|\mathcal{J}^*|$.
\end{proof}

\vspace{1mm}
\noindent\textbf{Conclusion.} The attention-based box identification process is stable with respect to small perturbations and threshold tuning. It behaves as a piecewise constant function over $\beta$, and retains the high-attention regions even under noise. This ensures reliable extraction of foreground regions in training and mitigates the impact of spurious or scattered attention.

\section{Baseline Methods}
To ensure a fair comparison, we evaluated several ViT-based UDA methods, including CDTrans and DOT-B, using the results reported in their respective papers. Additionally, we referred to notable convolutional network-based methods, such as FixBi and CDAN, as summarized in Table 1. Notably, even when compared purely in terms of accuracy, our method outperforms these convolutional network-based approaches on the VisDA-2017 benchmark.

\begin{table}[h]
  \label{tab:app:commands}
  \centering
  \caption{\textbf{Comparison with SOTA methods on VisDA-2017~\cite{peng2017visda}.} The best performance is marked as bold.  The avg column contains mean across all classes.}
  \begin{tabular}{c|ccccccccccccl}
    \toprule
    Method               & backbone   & avg           \\
    \midrule
    CAN                  & ResNet-101 & 87.2          \\
    FixBi                & ResNet-101 & 87.2          \\
    CGDM-B               & ViT-B      & 82.3          \\
    SHOT-B               & ViT-B      & 85.9          \\
    CDtrans              & ViT-B      & 88.4          \\
    SDAT                 & ViT-B      & 89.8          \\
    DOT-B                & ViT-B      & 90.3          \\
    PCaM~(\textbf{ours}) & ViT-B      & \textbf{90.8} \\
    \bottomrule
  \end{tabular}
\end{table}

\section{Datasets}
\textbf{Office-Home} is a small yet challenging dataset for domain adaptation, featuring office and home scenes. It includes 15,500 images across 65 categories, organized into four domains: artistic images (Ar), clip art (Cl), product images (Pr), and real-world images (Rw). This dataset captures a variety of everyday objects, making it an essential benchmark for domain adaptation. More details can be found at \href{https://paperswithcode.com/dataset/office-home}{Office-Home Dataset}.

\textbf{VisDA-2017} is a large-scale simulation-to-real dataset characterized by substantial domain differences. The source domain comprises 152,397 synthetic images generated with 3D models under varying angles and lighting conditions, while the target domain consists of 55,388 real-world images. It is widely used to evaluate methods that address significant domain gaps. More information is available at \href{https://paperswithcode.com/dataset/visda-2017}{VisDA-2017 Dataset}.

\textbf{DomainNet} is the largest and most challenging dataset for domain adaptation, containing over 600,000 images spanning 345 categories and six domains: Clipart (clp), Infograph (inf), Painting (pnt), Quickdraw (qdr), Real (rel), and Sketch (skt). By permuting these domains, 30 adaptation tasks can be created (e.g., clp→inf, ..., skt→rel), making it a comprehensive benchmark for testing the robustness of domain adaptation approaches. Additional details can be found at \href{https://paperswithcode.com/dataset/domainnet}{DomainNet Dataset}.

\textbf{Remote Sensing Datasets.}
Remote sensing datasets are essential for evaluating methods in tasks such as image classification and domain adaptation, offering diverse and complex aerial imagery. The AID dataset~\cite{xia2017aid} contains 10,000 high-resolution aerial images spanning 30 categories, including urban, agricultural, and industrial land uses, with image sizes of 600$\times$600 pixels. It is known for its diversity in scale, perspective, and environmental conditions, making it a challenging benchmark. The NWPU-RESISC45 dataset~\cite{wang2020nwpu}, on the other hand, is larger, comprising 31,500 images across 45 scene classes. Each image measures 256$\times$256 pixels, covering a wide range of land-use patterns such as forests, sports fields, and residential areas. These datasets offer significant domain diversity, providing a reliable benchmark for evaluating domain adaptation methods under realistic conditions.

\begin{table*}[h]
  \centering
  \caption{Information of UDA datasets.}
  \label{tab:app:commands}
  \begin{tabular}{lcc}
    \toprule
    Dataset        & Number of Images & Number of Categories \\
    \midrule
    Office-Home    & 15,500           & 65                   \\
    VisDA-2017     & 207,785          & 12                   \\
    DomainNet      & 586,575          & 345                  \\
    Remote Sensing & 10,000           & 30                   \\
    \bottomrule
  \end{tabular}
\end{table*}

\section{Hyperparameters}

To evaluate the effectiveness of our proposed method, we conducted experiments on four popular UDA benchmarks: VisDA-2017, Office-Home, Office-31, and DomainNet. For fair comparison, we adopted the DeiT-base architecture~\cite{touvron2021training} as the backbone. The input image size was fixed at 224$\times$224, differing from ViT-base architectures commonly used in other methods. All backbones were initialized with ImageNet~1K pretrained weights.

We utilized the Stochastic Gradient Descent (SGD) algorithm~\cite{smith2024origin} with a momentum of 0.9 and a weight decay ratio of 1e-4 to optimize the training process. The learning rate was set to 3e-3 for Office-Home, Office-31, and DomainNet, while a higher learning rate of 8e-3 was used for VisDA-2017~\cite{peng2017visda}, which converges more easily. Training was conducted for 40 epochs, including a 10-epoch warm-up phase, with a batch size of 64.

Pseudo-labels for the target domain were initialized from a source model and updated iteratively during training. To further refine performance, the activation threshold was set to $\beta = 0.05$, and the weights for all loss terms—including source domain classification loss, target domain classification loss, distillation loss, and progressively focused loss—were fixed at 1.0. All experiments were conducted on Nvidia V100 GPUs. The code and pretrained models will be made publicly available.

\section{Case Study of Matching Different Scales}

To further demonstrate the effectiveness of PCaM, we include a detailed case study showcasing examples of matching results at different scales in the source and target domains. This addresses the reviewers' comments on the importance of real-world examples.

Figure~\ref{fig:appendix_pcam_cs} illustrates the case study, highlighting how PCaM performs robust matching across varying scales, which is a critical challenge in domain adaptation tasks. The visualization provides insights into how our method aligns features between domains, even under significant scale variations.

\begin{figure}[h]
  \centering
  \includegraphics[width=0.55\linewidth]{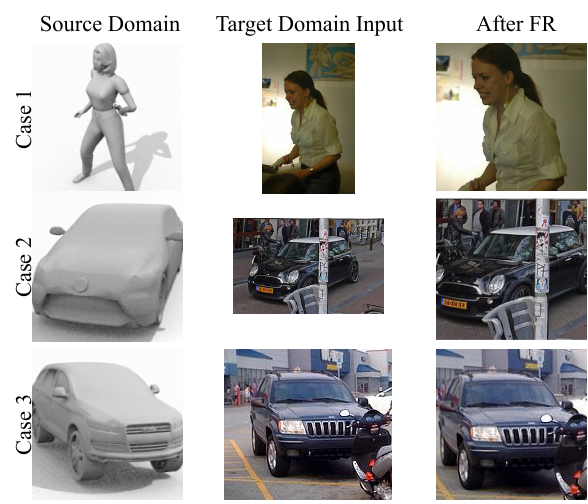}
  \caption{Case studies of matching different scales in the source and target domains.}
  \label{fig:appendix_pcam_cs}
\end{figure}

\section{Additional Ablation Studies}
\begin{table*}[t]
  \centering
  \caption{Performance comparison with different weights for $\mathcal{L}_\text{PF}$ on VisDA-2017 and DomainNet datasets. Mean and variance are reported across five runs.}
  \label{tab:appendix_loss_balance}
  \begin{tabular}{l|cccc}
    \toprule
    Dataset    & 0.2 $\mathcal{L}_\text{PF}$ & 0.5 $\mathcal{L}_\text{PF}$ & 1.0 $\mathcal{L}_\text{PF}$ & 2.0 $\mathcal{L}_\text{PF}$ \\ \midrule
    VisDA-2017 & {90.8}($\pm$0.08)    & \textbf{91.4}($\pm$0.06)    & 90.6($\pm$0.08)             & 90.7($\pm$0.09)             \\
    DomainNet  & 46.9($\pm$0.11)             & 47.0($\pm$0.10)             & \textbf{47.2}($\pm$0.12)    & 47.1($\pm$0.13)             \\ \bottomrule
  \end{tabular}
\end{table*}

In response to the request for additional ablation studies, we extended our analysis to evaluate the transferability of PCaM under various challenging scenarios, particularly focusing on low-resolution inputs. This study provides deeper insights into the robustness of our method. Table~\ref{tab:appendix_ablation_resolution} presents the results of these experiments on the VisDA-2017 dataset, comparing PCaM with the baseline method, CDTrans, across resolutions of 256$\times$256, 128$\times$128, and 64$\times$64.

As shown in the table, PCaM consistently outperforms CDTrans across all tested resolutions, demonstrating superior robustness to resolution changes. These findings further validate the effectiveness of PCaM in handling challenging domain adaptation tasks.

\begin{table}[h]
  \centering
  \caption{Performance comparison of PCaM and CDTrans under different input resolutions on the VisDA-2017 dataset.}
  \label{tab:appendix_ablation_resolution}
  \begin{tabular}{@{}c|ccc@{}}
    \toprule
    \textbf{Resolution} & \textbf{256$\times$256} & \textbf{128$\times$128} & \textbf{64$\times$64} \\ \midrule
    CDTrans             & 88.4             & 87.5             & 86.6           \\
    PCaM                & \textbf{91.4}    & \textbf{90.2}    & \textbf{89.6}  \\ \bottomrule
  \end{tabular}
\end{table}

\section{Balancing Contributions from Different Losses}

To address the reviewer's question about balancing the contributions of different losses, we conducted an analysis focusing on the integration of our additional loss, $\mathcal{L}_\text{PF}$, into the baseline model (CDTrans). Rather than adjusting all loss ratios, we fine-tuned only the contribution of $\mathcal{L}_\text{PF}$. This approach allowed us to assess its impact on the model's overall performance.

Table~\ref{tab:appendix_loss_balance} presents the results of our experiments on the VisDA-2017 and DomainNet datasets. We evaluated multiple weight settings for $\mathcal{L}_\text{PF}$ (0.2, 0.5, 1.0, and 2.0) and ran all experiments five times to account for stability, reporting the mean and variance. The results show that incorporating $\mathcal{L}_\text{PF}$ enhances performance, with optimal results observed at specific weight settings depending on the dataset. For VisDA-2017, 0.2 and 0.5 yielded the best performance, while for DomainNet, 1.0 provided the highest accuracy.

These findings suggest that the contribution of $\mathcal{L}_\text{PF}$ can be effectively balanced to achieve significant improvements without compromising stability.

\end{document}